\def\eqref#1{equation~(\ref{#1})}
\def\Eqref#1{Equation~(\ref{#1})}
\def\1{\bm{1}}
\DeclareMathAlphabet{\mathsfit}{\encodingdefault}{\sfdefault}{m}{sl}
\SetMathAlphabet{\mathsfit}{bold}{\encodingdefault}{\sfdefault}{bx}{n}
\newcommand{\E}{\mathbb{E}}
\newcommand{\R}{\mathbb{R}}
\newcommand{\N}{\mathbb{N}}
\DeclareMathOperator*{\argmin}{arg\,min}
\newtheorem{theorem}{Theorem}
\newtheorem{corollary}{Corollary}[theorem]
\newtheorem{lemma}{Lemma}
\theoremstyle{remark}
\newtheorem*{remark}{Remark}
\title{Adaptive Gradient Methods with Dynamic \\Bound of Learning Rate}
\author{Liangchen Luo$^\dagger$\thanks{Equal contribution. This work was done when the first and second authors were on an internship at DiDi AI Labs.}, Yuanhao Xiong$^\ddagger$\footnotemark[1], 
% Jieping Ye$^\|$, 
Yan Liu$^\S$, Xu Sun$^{\dagger\mathparagraph}$\\
$^\dagger$MOE Key Lab of Computational Linguistics, School of EECS, Peking University\\
$^\ddagger$College of Information Science and Electronic Engineering, Zhejiang University\\
% $^\|$DiDi Research
$^\S$Department of Computer Science, University of Southern California\\
$^\mathparagraph$Center for Data Science, Beijing Institute of Big Data Research, Peking University\\
$^\dagger$\texttt{\{luolc,xusun\}@pku.edu.cn} $^\ddagger$\texttt{xiongyh@zju.edu.cn}
% $^\|$\texttt{yejieping@didiglobal.com}
$^\S$\texttt{yanliu.cs@usc.edu}
}
\begin{document}

\maketitle

\begin{abstract}
Adaptive optimization methods such as \textsc{AdaGrad}, \textsc{RMSprop} and \textsc{Adam} have been proposed to achieve a rapid training process with an element-wise scaling term on learning rates. 
Though prevailing, they are observed to generalize poorly compared with \textsc{Sgd} or even fail to converge due to unstable and extreme learning rates. 
Recent work has put forward some algorithms such as \textsc{AMSGrad} to tackle this issue but they failed to achieve considerable improvement over existing methods. 
In our paper, we demonstrate that extreme learning rates can lead to poor performance.
We provide new variants of \textsc{Adam} and \textsc{AMSGrad}, called \textsc{AdaBound} and \textsc{AMSBound} respectively, which employ dynamic bounds on learning rates to achieve a gradual and smooth transition from adaptive methods to \textsc{Sgd} and give a theoretical proof of convergence.
We further conduct experiments on various popular tasks and models, which is often insufficient in previous work.
Experimental results show that new variants can eliminate the generalization gap between adaptive methods and \textsc{Sgd} and maintain higher learning speed early in training at the same time.
Moreover, they can bring significant improvement over their prototypes, especially on complex deep networks.
The implementation of the algorithm can be found at \url{https://github.com/Luolc/AdaBound}.

\end{abstract}

\section{Introduction}

There has been tremendous progress in first-order optimization algorithms for training deep neural networks.
One of the most dominant algorithms is stochastic gradient descent (\textsc{Sgd}) \citep{sgd}, which performs well across many applications in spite of its simplicity.
However, there is a disadvantage of \textsc{Sgd} that it scales the gradient uniformly in all directions.
This may lead to poor performance as well as limited training speed when the training data are sparse.
To address this problem, recent work has proposed a variety of \textit{adaptive} methods that scale the gradient by square roots of some form of the average of the squared values of past gradients.
Examples of such methods include \textsc{Adam} \citep{Adam}, \textsc{AdaGrad} \citep{AdaGrad} and \textsc{RMSprop} \citep{RMSprop}.
\textsc{Adam} in particular has become the default algorithm leveraged across many deep learning frameworks due to its rapid training speed \citep{marginal}.

Despite their popularity, the generalization ability and out-of-sample behavior of these adaptive methods are likely worse than their non-adaptive counterparts.
Adaptive methods often display faster progress in the initial portion of the training, but their performance quickly plateaus on the unseen data (development/test set) \citep{marginal}.
Indeed, the optimizer is chosen as \textsc{Sgd} (or with momentum) in several recent state-of-the-art works in natural language processing and computer vision \citep{Luo2019Personalized,GroupNormalization}, wherein these instances \textsc{Sgd} does perform better than adaptive methods.
\citet{ICLR18best} have recently proposed a variant of \textsc{Adam} called \textsc{AMSGrad}, hoping to solve this problem.
The authors provide a theoretical guarantee of convergence but only illustrate its better performance on training data.
However, the generalization ability of \textsc{AMSGrad} on unseen data is found to be similar to that of \textsc{Adam} while a considerable performance gap still exists between \textsc{AMSGrad} and \textsc{Sgd} \citep{Switch,Chen2018OnOptimization}.

In this paper, we first conduct an empirical study on \textsc{Adam} and illustrate that both extremely large and small learning rates exist by the end of training.
The results correspond with the perspective pointed out by \citet{marginal} that the lack of generalization performance of adaptive methods may stem from unstable and extreme learning rates.
% lack of generalization这话有点太武断
In fact, introducing non-increasing learning rates, the key point in \textsc{AMSGrad}, may help abate the impact of huge learning rates, while it neglects possible effects of small ones.
We further provide an example of a simple convex optimization problem to elucidate how tiny learning rates of adaptive methods can lead to undesirable non-convergence.
In such settings, \textsc{RMSprop} and \textsc{Adam} provably do not converge to an optimal solution, and furthermore, however large the initial step size $\alpha$ is, it is impossible for \textsc{Adam} to fight against the scale-down term.
% === deprecated ===
% Our analysis can be easily extended to other adaptive methods that employ averaging squared gradients over essentially a fixed size window in the immediate past.
% We also omit a general analysis for the sake of clarity.

Based on the above analysis, we propose new variants of \textsc{Adam} and \textsc{AMSGrad}, named \textsc{AdaBound} and \textsc{AMSBound}, which do not suffer from the negative impact of extreme learning rates.
We employ dynamic bounds on learning rates in these adaptive methods, where the lower and upper bound are initialized as zero and infinity respectively, and they both smoothly converge to a constant final step size.
The new variants can be regarded as adaptive methods at the beginning of training, and they gradually and smoothly transform to \textsc{Sgd} (or with momentum) as time step increases.
In this framework, we can enjoy a rapid initial training process as well as good final generalization ability.
We provide a convergence analysis for the new variants in the convex setting.

We finally turn to an empirical study of the proposed methods on various popular tasks and models in computer vision and natural language processing.
Experimental results demonstrate that our methods have higher learning speed early in training and in the meantime guarantee strong generalization performance compared to several adaptive and non-adaptive methods.
Moreover, they can bring considerable improvement over their prototypes especially on complex deep networks.

\section{Notations and Preliminaries}

\textbf{Notations}\quad
Given a vector $\theta \in \R^d$ we denote its $i$-th coordinate by $\theta_i$; we use $\theta^k$ to denote element-wise power of $k$ and $\|\theta\|$ to denote its $\ell_2$-norm; for a vector $\theta_t$ in the $t$-th iteration, the $i$-th coordinate of $\theta_t$ is denoted as $\theta_{t,i}$ by adding a subscript $i$.
Given two vectors $v, w \in \R^d$, we use $\langle v, w \rangle$ to denote their inner product, $v \odot w$ to denote element-wise product, $v/w$ to denote element-wise division, $\max(v,w)$ to denote element-wise maximum and $\min(v,w)$ to denote element-wise minimum.
We use $\mathcal{S}_+^d$ to denote the set of all positive definite $d \times d$ matrices.
For a vector $a \in \R^d$ and a positive definite matrix $M \in \R^{d \times d}$, we use $a/M$ to denote $M^{-1}a$ and $\sqrt{M}$ to denote $M^{1/2}$.
The projection operation $\Pi_{\mathcal{F}, M}(y)$ for $M \in \mathcal{S}_+^d$ is defined as $\argmin_{x \in \mathcal{F}} \|M^{1/2}(x-y)\|$ for $y \in \R^d$.
We say $\mathcal{F}$ has bounded diameter $D_\infty$ if $\|x-y\|_\infty \leq D_\infty$ for all $x, y \in \mathcal{F}$.

\textbf{Online convex programming}\quad
A flexible framework to analyze iterative optimization methods is the \textit{online optimization problem}.
It can be formulated as a repeated game between a player (the algorithm) and an adversary.
At step $t$, the algorithm chooses an decision $x_t \in \mathcal{F}$, where $\mathcal{F} \subset \R^d$ is a convex feasible set.
Then the adversary chooses a convex loss function $f_t$ and the algorithm incurs loss $f_t(x_t)$.
The difference between the total loss $\sum_{t=1}^T f_t(x_t)$ and its minimum value for a fixed decision is known as the \textit{regret}, which is represented by $R_T = \sum_{t=1}^T f_t(x_t) - \min_{x \in \mathcal{F}} \sum_{t=1}^T f_t(x)$.
Throughout this paper, we assume that the feasible set $\mathcal{F}$ has bounded diameter and $\|\nabla f_t(x)\|_\infty$ is bounded for all $t \in [T]$ and $x \in \mathcal{F}$.
We are interested in algorithms with little regret.
Formally speaking, our aim is to devise an algorithm that ensures $R_T = o(T)$, which implies that on average, the model's performance converges to the optimal one.
It has been pointed out that an online optimization algorithm with vanishing average regret yields a corresponding stochastic optimization algorithm \citep{Cesa2002OnAlgorithms}.
Thus, following \citet{ICLR18best}, we use online gradient descent and stochastic gradient descent synonymously.

\textbf{A generic overview of optimization methods}\quad
\begin{algorithm}[tbh]
    \caption{Generic framework of optimization methods}
    \label{al:general}
    
    \textbf{Input:} $x_1 \in \mathcal{F}$, initial step size $\alpha$, sequence of functions $\{ \phi_t, \psi_t \}_{t=1}^T$
    
    \begin{algorithmic}[1]
    \For{$t = 1$ \textbf{to} $T$}
    \State $g_t = \nabla f_t(x_t)$
    \State $m_t = \phi_t(g_1, \cdots, g_t)$ and $V_t = \psi_t(g_1, \cdots, g_t)$
    \State $\alpha_t = \alpha / \sqrt{t}$
    \State $\hat{x}_{t+1} = x_t - \alpha_t m_t / \sqrt{V_t}$
    \State $x_{t+1} = \Pi_{\mathcal{F}, \sqrt{V_t}} (\hat{x}_{t+1})$
    \EndFor
    \end{algorithmic}
\end{algorithm}
We follow \citet{ICLR18best} to provide a generic framework of optimization methods in Algorithm~\ref{al:general} that encapsulates many popular adaptive and non-adaptive methods.
This is useful for understanding the properties of different optimization methods.
Note that the algorithm is still abstract since the functions $\phi_t : \mathcal{F}^t \rightarrow \R^d$ and $\psi_t : \mathcal{F}^d \rightarrow \mathcal{S}_+^d$ have not been specified.
In this paper, we refer to $\alpha$ as initial step size and $\alpha_t / \sqrt{V_t}$ as learning rate of the algorithm.
Note that we employ a design of decreasing step size by $\alpha_t = \alpha / \sqrt{t}$ for it is required for theoretical proof of convergence.
However such an aggressive decay of step size typically translates into poor empirical performance, while a simple constant step size $\alpha_t = \alpha$ usually works well in practice.
For the sake of clarity, we will use the decreasing scheme for theoretical analysis and the constant schemem for empirical study in the rest of the paper.

Under such a framework, we can summarize the popular optimization methods in Table~\ref{tab:optimizers}.\footnote{We ignore the debiasing term used in the original version of \textsc{Adam} in \citet{Adam} for simplicity. Our arguments apply to the debiased version as well.}
A few remarks are in order.
We can see the scaling term $\psi_t$ is $\mathbb{I}$ in \textsc{Sgd(M)}, while adaptive methods introduce different kinds of averaging of the squared values of past gradients.
\textsc{Adam} and \textsc{RMSprop} can be seen as variants of \textsc{AdaGrad}, where the former ones use an exponential moving average as function $\psi_t$ instead of the simple average used in \textsc{AdaGrad}.
In particular, \textsc{RMSprop} is essentially a special case of \textsc{Adam} with $\beta_1 = 0$.
\textsc{AMSGrad} is not listed in the table as it does not has a simple expression of $\psi_t$.
It can be defined as $\psi_t = \mathrm{diag}(\hat{v}_t)$ where $\hat{v}_t$ is obtained by the following recursion: $v_t = \beta_2 v_{t-1} + (1-\beta_2)g_t^2$ and $\hat{v}_t = \max(\hat{v}_{t-1}, v_{t})$ with $\hat{v}_0 = v_0 = \bm{0}$.
The definition of $\phi_t$ is same with that of \textsc{Adam}.
In the rest of the paper we will mainly focus on \textsc{Adam} due to its generality but our arguments also apply to other similar adaptive methods such as \textsc{RMSprop} and \textsc{AMSGrad}.

\begin{table}[tbh]
    \centering
    \small
    \caption{
        An overview of popular optimization methods using the generic framework.
    }
    \begin{tabular}{c|ccccc}
         \toprule
         & \textsc{Sgd} & \textsc{SgdM} & \textsc{AdaGrad} & \textsc{RMSprop} & \textsc{Adam} \\
         \midrule
         $\phi_t$ & $g_t$ & $\sum \limits_{i=1}^t \gamma^{t-i} g_i$ & $g_t$ & $g_t$ & $(1-\beta_1) \sum \limits_{i=1}^t \beta_1^{t-i} g_i$ \\
         $\psi_t$ & $\mathbb{I}$ & $\mathbb{I}$ & $\mathrm{diag}(\sum \limits_{i=1}^t g_i^2) / t$ & $(1-\beta_2) \mathrm{diag}(\sum \limits_{i=1}^t \beta_2^{t-i} g_i^2)$ & $(1-\beta_2) \mathrm{diag}(\sum \limits_{i=1}^t \beta_2^{t-i} g_i^2)$ \\
         \bottomrule
    \end{tabular}
    \label{tab:optimizers}
\end{table}

\section{The Non-Convergence Caused by Extreme Learning Rate}

In this section, we elaborate the primary defect in current adaptive methods with a preliminary experiment and a rigorous proof. As mentioned above, adaptive methods like \textsc{Adam} are observed to perform worse than \textsc{Sgd}.
\citet{ICLR18best} proposed \textsc{AMSGrad} to solve this problem but recent work has pointed out \textsc{AMSGrad} does not show evident improvement over \textsc{Adam} \citep{Switch,Chen2018OnOptimization}.
Since \textsc{AMSGrad} is claimed to have a smaller learning rate compared with \textsc{Adam}, the authors only consider large learning rates as the cause for bad performance of \textsc{Adam}.
However, small ones might be a pitfall as well.
Thus, we speculate both extremely large and small learning rates of \textsc{Adam} are likely to account for its ordinary generalization ability.

For corroborating our speculation, we sample learning rates of several weights and biases of ResNet-34 on CIFAR-10 using \textsc{Adam}.
Specifically, we randomly select nine $3\times3$ convolutional kernels from different layers and the biases in the last linear layer.
As parameters of the same layer usually have similar properties, here we only demonstrate learning rates of nine weights sampled from nine kernels respectively and one bias from the last layer by the end of training, and employ a heatmap to visualize them.
As shown in Figure~\ref{fig:sampled-lr}, we can find that when the model is close to convergence, learning rates are composed of tiny ones less than $0.01$ as well as huge ones greater than $1000$.
\begin{figure}[hbt]
    \begin{center}
    \includegraphics[width=0.55\textwidth]{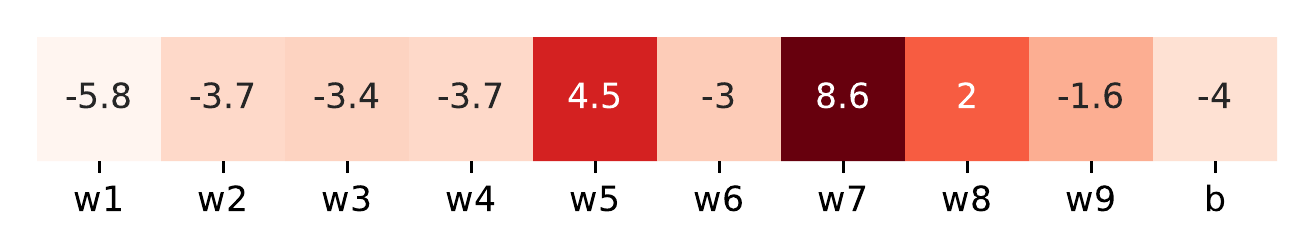}
    \end{center}
    \caption{
        Learning rates of sampled parameters. 
        Each cell contains a value obtained by conducting a logarithmic operation on the learning rate. 
        The lighter cell stands for the smaller learning rate.
    }
    \label{fig:sampled-lr}
\end{figure}

The above analysis and observation show that there are indeed learning rates which are too large or too small in the final stage of the training process.
\textsc{AMSGrad} may help abate the impact of huge learning rates, but it neglects the other side of the coin.
Insofar, we still have the following two doubts.
First, does the tiny learning rate really do harm to the convergence of \textsc{Adam}?
Second, as the learning rate highly depends on the initial step size, can we use a relatively larger initial step size $\alpha$ to get rid of too small learning rates?

To answer these questions, we show that undesirable convergence behavior for \textsc{Adam} and \textsc{RMSprop} can be caused by extremely small learning rates, and furthermore, in some cases no matter how large the initial step size $\alpha$ is, \textsc{Adam} will still fail to find the right path and converge to some highly suboptimal points.
Consider the following sequence of linear functions for $\mathcal{F} = [-2,2]$:
\[
    f_{t}(x) = 
        \begin{cases}
            -100, & \text{ for } x < 0; \\
            0, & \text{ for } x > 1; \\
            -x, & \text{ for } 0 \leq x \leq 1 \text{ and } t \bmod C = 1; \\
            2x, & \text{ for } 0 \leq x \leq 1 \text{ and } t \bmod C = 2; \\
            0, & \text{otherwise}
        \end{cases}
\]
where $C \in \N$ satisfies: $5 \beta_2^{C-2} \leq (1-\beta_2) / (4-\beta_2)$.
For this function sequence, it is easy to see any point satisfying $x < 0$ provides the minimum regret.
Supposing $\beta_1 = 0$,
we show that \textsc{Adam} converges to a highly suboptimal solution of $x \geq 0$ for this setting.
Intuitively, the reasoning is as follows.
The algorithm obtains a gradient $-1$ once every $C$ steps, which moves the algorithm in the wrong direction.
Then, at the next step it observes a gradient $2$.
But the larger gradient $2$ is unable to counteract the effect to wrong direction since the learning rate at this step is scaled down to a value much less than the previous one, and hence $x$ becomes larger and larger as the time step increases.
We formalize this intuition in the result below.
\begin{theorem}
\label{the:spec-example}
There is an online convex optimization problem where for any initial step size $\alpha$, \textsc{Adam} has non-zero average regret i.e., $R_T/T \nrightarrow 0$ as $T \rightarrow \infty$.
\end{theorem}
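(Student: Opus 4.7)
The plan is to prove that, with $\beta_1=0$ and a suitable initialization $x_1\in[0,1]$, the \textsc{Adam} iterates stay in $[0,2]$ and in particular never cross into the region $x<0$ where the optimal value $-100$ is attained. Once that invariant is in hand the regret bound is immediate: for every $x\in[0,2]$ one has $f_t(x)\geq -1$, so $\sum_{t=1}^T f_t(x_t)\geq -T$, while the comparator $x^\star\in[-2,0)$ attains $\sum_{t=1}^T f_t(x^\star)=-100T$; hence $R_T/T\geq 99$ for all $T$ and the average regret cannot vanish. So the whole task reduces to the sign invariant $x_t\geq 0$.

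The core of the argument is a one-period analysis. With $\beta_1=0$ we have $m_t=g_t$, and within one period of length $C$ only two steps move the iterate: step $t=Ck+1$ (where $g_t=-1$) nudges $x$ up by $\Delta_1 := \alpha_{Ck+1}/\sqrt{v_{Ck+1}}$, and step $t=Ck+2$ (where $g_t=+2$) nudges it down by $\Delta_2 := 2\alpha_{Ck+2}/\sqrt{v_{Ck+2}}$; the remaining $C-2$ steps leave $x_t$ unchanged and merely let $v_t$ decay by a factor of $\beta_2$. I will establish the inequality $\Delta_1\geq\Delta_2$, which yields $x_{C(k+1)+1}=x_{Ck+1}+\Delta_1-\Delta_2\geq x_{Ck+1}\geq 0$ and so, by induction, $x_t\geq 0$ for every $t$. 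The crucial observation is that $\alpha$ is a common factor in both $\Delta_1$ and $\Delta_2$, so $\Delta_1\geq\Delta_2$ is independent of the initial step size—this is precisely what delivers the \emph{for any $\alpha$} clause of the theorem.

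The main technical obstacle is verifying $\Delta_1\geq\Delta_2$, which after squaring becomes $\frac{Ck+2}{Ck+1}\cdot\frac{v_{Ck+2}}{v_{Ck+1}}\geq 4$. Using $v_{Ck+2}=\beta_2 v_{Ck+1}+4(1-\beta_2)$, this reduces to a uniform upper bound on $v_{Ck+1}$. I would derive such a bound by solving the periodic recursion $v_{C(k+1)+2}=\beta_2^C v_{Ck+2}+(4+\beta_2)(1-\beta_2)$ obtained by composing one update at $Ck+1$, one at $Ck+2$, and $C-2$ pure $\beta_2$-decay steps; the closed form gives $v_{Ck+2}\leq (4+\beta_2)(1-\beta_2)/(1-\beta_2^C)$, and then $v_{Ck+1}=\beta_2^{C-1}v_{C(k-1)+2}+(1-\beta_2)$ transfers this to a bound on $v_{Ck+1}$. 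Substituting into the target inequality and invoking the hypothesis $5\beta_2^{C-2}\leq (1-\beta_2)/(4-\beta_2)$—which is exactly calibrated to make $\beta_2^{C-2}$ negligible next to $1-\beta_2$—produces the needed estimate $v_{Ck+2}/v_{Ck+1}\geq 4-O(1/Ck)$ for all $k$. The one-line conceptual takeaway is that the large $+2$ gradient inflates $v_t$ so aggressively that the step meant to correct the earlier misleading $-1$ gradient is itself strictly smaller, a scale-down that no choice of $\alpha$ can undo; the rest is bookkeeping, plus the observation that projection onto $\mathcal{F}=[-2,2]$ is compatible with the invariant $x_t\geq 0$.
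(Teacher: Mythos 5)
Your proposal follows essentially the same route as the paper's proof: the same function sequence and feasible set, the same induction on the invariant $x_t\geq 0$ over periods of length $C$, and the same key inequality that the corrective step with gradient $2$ is dominated by the preceding step with gradient $-1$ because the squared gradient inflates $v_t$ (reducing to $v_{Ck+2}\geq 4v_{Ck+1}$, i.e.\ a uniform upper bound on $v$ delivered by the condition $5\beta_2^{C-2}\leq(1-\beta_2)/(4-\beta_2)$), with the independence from $\alpha$ coming from $\alpha$ cancelling in the comparison, exactly as in the paper. The only point to tighten is the case where the iterate exceeds $1$ or is clipped by the projection onto $[-2,2]$ --- there the gradients vanish and the invariant is preserved trivially, which the paper handles as an explicit separate case in the induction.
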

We relegate all proofs to the appendix.
Note that the above example also holds for constant step size $\alpha_t = \alpha$.
Also note that vanilla \textsc{Sgd} does not suffer from this problem.
There is a wide range of valid choices of initial step size $\alpha$ where the average regret of \textsc{Sgd} asymptotically goes to $0$, in other words, converges to the optimal solution.
This problem can be more obvious in the later stage of a training process in practice when the algorithm gets stuck in some suboptimal points.
In such cases, gradients at most steps are close to $0$ and the average of the second order momentum may be highly various due to the property of exponential moving average.
Therefore, ``correct'' signals which appear with a relatively low frequency (i.e. gradient $2$ every $C$ steps in the above example) may not be able to lead the algorithm to a right path, if they come after some ``wrong'' signals (i.e. gradient $1$ in the example), even though the correct ones have larger absolute value of gradients.

One may wonder if using large $\beta_1$ helps as we usually use $\beta_1$ close to $1$ in practice.
However, the following result shows that for any constant $\beta_1$ and $\beta_2$ with $\beta_1 < \sqrt{\beta_2}$, there exists an example where \textsc{Adam} has non-zero average regret asymptotically regardless of the initial step size $\alpha$.
\begin{theorem}
\label{the:gen-example}
For any constant $\beta_1, \beta_2 \in [0,1)$ such that $\beta_1 < \sqrt{\beta_2}$, there is an online convex optimization problem where for any initial step size $\alpha$, \textsc{Adam} has non-zero average regret i.e., $R_T/T \nrightarrow 0$ as $T \rightarrow \infty$.
\end{theorem}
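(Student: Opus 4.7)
The plan is to extend the construction used in Theorem~\ref{the:spec-example} to accommodate $\beta_1 > 0$. As before, I would take a periodic sequence of convex piecewise-linear losses on a bounded interval, where each period of length $C$ contains one step with a large gradient of one sign and several steps with smaller gradients of the opposite sign, and the loss is extended by a steep linear piece outside $[0,1]$ so that the true minimizer lies at the left endpoint of $\mathcal{F}$. The new ingredient is that the period $C = C(\beta_1, \beta_2)$ must now be chosen large enough that \emph{both} $\beta_2^{C}$ and $(\beta_1/\sqrt{\beta_2})^{C}$ are negligibly small. This is precisely where the hypothesis $\beta_1 < \sqrt{\beta_2}$ enters: it makes the latter ratio strictly less than one, so such a $C$ exists.

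Next I would analyze the periodic steady state of the coupled recursions $m_t = \beta_1 m_{t-1} + (1-\beta_1)g_t$ and $v_t = \beta_2 v_{t-1} + (1-\beta_2)g_t^2$ driven by this periodic gradient stream. Solving the fixed-point equations over one period yields closed-form steady-state values $(m_t^{\star}, v_t^{\star})_{t=1}^{C}$, and the quantity to control is the signed per-period displacement
\[
\Delta \;=\; \sum_{t=1}^{C} \frac{m_t^{\star}}{\sqrt{v_t^{\star}}}.
\]
The goal is to show $\Delta$ has the \emph{wrong} sign, so that at steady state \textsc{Adam} systematically drifts $x_t$ away from the optimum by a fixed positive amount per period. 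Intuitively, a large good gradient $G$ at time $s$ contributes $(1-\beta_1)\beta_1^{k}G$ to $m_{s+k}$ but $(1-\beta_2)\beta_2^{k}G^2$ to $v_{s+k}$, so its effect on $m/\sqrt{v}$ shrinks geometrically like $(\beta_1/\sqrt{\beta_2})^{k}$ and its total ``good'' contribution over a period is a bounded constant; meanwhile, the many small bad gradients keep $m_t$ slightly biased in the wrong direction at the many steps between large-gradient events, with only a mild effect on $v_t$. For $C$ large the ``bad'' contribution dominates, giving $\Delta$ the wrong sign.

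Once $|\Delta|$ is bounded below by a positive constant independent of $t$, standard arguments (the iterate is eventually clipped at the wrong endpoint of $\mathcal{F}$ by the projection $\Pi_{\mathcal{F}, \sqrt{V_t}}$) show that the regret against the minimizer at the other endpoint is $\Omega(T)$, giving $R_T/T \not\to 0$. Since only the sign of the per-period displacement matters, and the initial step size $\alpha$ only rescales magnitudes uniformly across the period, the conclusion holds for every $\alpha > 0$.

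The main obstacle is the bookkeeping for $m_t$: with $\beta_1 > 0$ the momentum carries a geometrically weighted history of past gradients, and dividing by the separate EMA $\sqrt{v_t}$ of squared gradients creates a nonlinear coupling. In the $\beta_1 = 0$ case of Theorem~\ref{the:spec-example} it sufficed to compare $\sqrt{v_t}$ at two adjacent steps, whereas here one must control a length-$C$ sum of ratios whose numerator and denominator both depend on the whole periodic history. The condition $\beta_1 < \sqrt{\beta_2}$ is exactly what makes these ratio sums geometrically convergent, so that taking $C$ large isolates the leading-order terms and the sign computation closes.
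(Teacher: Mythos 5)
Your plan is essentially the paper's own proof: the paper observes that the periodic construction of \citet{ICLR18best} (one large gradient $C$ per period followed by $C-1$ small opposite-sign gradients on a bounded interval, with $C$ chosen so that the $(\beta_1/\sqrt{\beta_2})^{C}$ and $\beta_2^{C}$ terms are negligible) already satisfies the theorem's constraints for every initial step size $\alpha$, and defers the per-period displacement/sign analysis to their Appendix~B. Your identification of where $\beta_1 < \sqrt{\beta_2}$ enters (geometric decay of the large gradient's influence on $m_t/\sqrt{v_t}$) and of the $\alpha$-independence of the sign of the drift matches that argument.
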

Furthermore, a stronger result stands in the easier stochastic optimization setting.
\begin{theorem}
\label{the:sto-example}
For any constant $\beta_1, \beta_2 \in [0,1)$ such that $\beta_1 < \sqrt{\beta_2}$, there is a stochastic convex optimization problem where for any initial step size $\alpha$, \textsc{Adam} does not converge to the optimal solution.
\end{theorem}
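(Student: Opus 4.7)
The plan is to lift the deterministic construction of Theorems~1--2 to an i.i.d.\ stochastic problem on $\mathcal{F} = [-1,1]$. Independently at each step, draw $f_t(x) = Cx$ with probability $p = (1+\delta)/(C+1)$ and $f_t(x) = -x$ otherwise, for a small $\delta > 0$ and a sufficiently large integer $C$ (chosen as a function of $\beta_1,\beta_2$). Then $F(x) := \mathbb{E}[f_t(x)] = \delta x$ is minimized at $x^\star = -1$, while $\mathbb{E}[g_t^2] = pC^2 + (1-p) = \Theta(C)$. The mismatch between $|\mathbb{E}[g_t]|$ and $\sqrt{\mathbb{E}[g_t^2]}$ is the mechanism that lets Adam's preconditioner point in the wrong direction.

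It then suffices to prove a uniform ``wrong-direction'' bound $\mathbb{E}[m_t/\sqrt{v_t} \mid x_t] \leq -c$ for some $c = c(\beta_1,\beta_2) > 0$ independent of $\alpha$ and $t$. Because the Adam update $x_{t+1} = \Pi_{\mathcal{F},\sqrt{v_t}}(x_t - \alpha_t m_t/\sqrt{v_t})$ couples to the step size $\alpha_t$ only through a positive multiplier, such a bound yields an expected rightward drift of at least $\alpha_t c$ whenever $x_t < 1$, and since $\sum_t \alpha_t = \infty$ while $\mathcal{F}$ is compact, the projected iterates must cluster near $x = 1$. Hence $\mathbb{E}[F(x_t) - F(x^\star)] = \delta\,\mathbb{E}[x_t + 1]$ stays bounded away from $0$, precluding convergence to $x^\star$ for any $\alpha$.

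For the drift bound, I would decompose the history by the random index $s \leq t$ of the most recent big gradient $g_s = C$ before time $t$. Unrolling the Adam recursions yields (modulo smaller tail contributions from times before $s$)
\[
m_t \approx (1-\beta_1)\beta_1^{t-s}C - (1-\beta_1^{t-s}), \qquad v_t \approx (1-\beta_2)\beta_2^{t-s}C^2 + (1-\beta_2^{t-s}).
\]
Setting $k_0 = K\log C$ for $K$ large enough (depending on $\beta_1,\beta_2$) makes both $\beta_1^{k_0}C$ and $\beta_2^{k_0}C^2$ vanish as $C \to \infty$, so on the event $t - s \geq k_0$---which has probability $1 - O(k_0 p) = 1 - O(\log C / C)$---we obtain $m_t \leq -(1-o(1))$ and $v_t \leq 1+o(1)$, hence $m_t/\sqrt{v_t} \leq -1/2$ for $C$ large. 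On the complementary small-probability event, $|m_t/\sqrt{v_t}|$ is bounded by $O(1/\sqrt{1-\beta_2})$: the hypothesis $\beta_1 < \sqrt{\beta_2}$ enters here, forcing the ratio $\beta_1^{t-s}C / \sqrt{(1-\beta_2)\beta_2^{t-s}C^2} = (\beta_1/\sqrt{\beta_2})^{t-s}/\sqrt{1-\beta_2}$ to stay bounded uniformly in $t-s \geq 0$. Combining the two regimes gives the required $c$.

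The main obstacle is the intricate joint dependence of $m_t$, $v_t$, and $g_t$ on the shared history $\{g_i\}_{i\leq t}$; the path decomposition by the most recent spike is the cleanest way to untangle it. Uniformity in $\alpha$ is automatic since $m_t/\sqrt{v_t}$ does not involve $\alpha$, and uniformity in $x_t$ is free because each $f_t$ is linear. The remaining work---converting the approximate bounds into strict inequalities, tracking the effect of the projection under $\sqrt{v_t}$, and absorbing the initial transient---is routine once the dominant scenario above is in hand.
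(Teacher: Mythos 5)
Your construction is exactly the one the paper uses: the paper's proof of Theorem~\ref{the:sto-example} simply restates the $f_t(x)=Cx$ w.p.\ $(1+\delta)/(C+1)$, else $-x$ example on $[-1,1]$ from \citet{ICLR18best} (Appendix~C) and defers the key drift bound $\E[\Delta_t]\geq 0$ to that reference. Your last-spike decomposition, the $\gamma=\beta_1/\sqrt{\beta_2}<1$ bound on $|m_t|/\sqrt{v_t}$, and the observation that uniformity in $\alpha$ is automatic all track the cited argument, so this is essentially the same approach, just with the outsourced step sketched explicitly.
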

\begin{remark}
The analysis of \textsc{Adam} in \citet{Adam} relies on decreasing $\beta_1$ over time, while here we use constant $\beta_1$.
Indeed, since the critical parameter is $\beta_2$ rather than $\beta_1$ in our analysis, it is quite easy to extend our examples to the case using decreasing scheme of $\beta_1$.
\end{remark}
As mentioned by \citet{ICLR18best}, the condition $\beta_1 < \sqrt{\beta_2}$ is benign and is typically satisfied in the parameter settings used in practice.
Such condition is also assumed in convergence proof of \citet{Adam}.
The above results illustrate the potential bad impact of extreme learning rates and algorithms are unlikely to achieve good generalization ability without solving this problem.

\section{Adaptive Moment Estimation with Dynamic Bound}

In this section we develop new variants of optimization methods and provide their convergence analysis.
Our aim is to devise a strategy that combines the benefits of adaptive methods, viz.\ fast initial progress, and the good final generalization properties of \textsc{Sgd}.
Intuitively, we would like to construct an algorithm that behaves like adaptive methods early in training and like \textsc{Sgd} at the end.

\begin{algorithm}[tbh]
    \caption{\textsc{AdaBound}}
    \label{al:AdaBound}
    
    \textbf{Input:} $x_1 \in \mathcal{F}$, initial step size $\alpha$, $\{\beta_{1t}\}_{t=1}^T$, $\beta_2$, lower bound function $\eta_l$, upper bound function $\eta_u$
    
    \begin{algorithmic}[1]
    \State Set $m_0 = 0$, $v_0 = 0$
    \For{$t = 1$ \textbf{to} $T$}
    \State $g_t = \nabla f_t(x_t)$
    \State $m_t = \beta_{1t} m_{t-1} + (1-\beta_{1t})g_t$
    \State $v_t = \beta_2 v_{t-1} + (1-\beta_2)g_t^2$ and $V_t = \mathrm{diag}(v_t)$
    \State $\hat{\eta}_t = \mathrm{Clip}(\alpha/\sqrt{V_t}, \eta_l(t), \eta_u(t))$ and $\eta_t = \hat{\eta}_t / \sqrt{t}$
    \State $x_{t+1} = \Pi_{\mathcal{F}, \mathrm{diag}(\eta_t^{-1})} (x_t - \eta_t \odot m_t)$
    \EndFor
    \end{algorithmic}
\end{algorithm}

Inspired by \textit{gradient clipping}, a popular technique used in practice that clips the gradients larger than a threshold to avoid gradient explosion, we employ clipping on learning rates in \textsc{Adam} to propose \textsc{AdaBound} in Algorithm~\ref{al:AdaBound}.
Consider applying the following operation in \textsc{Adam}
\[
    \mathrm{Clip}(\alpha/\sqrt{V_t}, \eta_l, \eta_u),
\]
which clips the learning rate element-wisely such that the output is constrained to be in $[\eta_l, \eta_u]$.\footnote{Here we use constant step size for simplicity. The case using a decreasing scheme of step size is similar. This technique was also mentioned in previous work \citep{Switch}.}
It follows that \textsc{Sgd(M)} with $\alpha = \alpha^*$ can be considered as the case where $\eta_l = \eta_u = \alpha^*$.
As for \textsc{Adam}, $\eta_l = 0$ and $\eta_u = \infty$.
Now we can provide the new strategy with the following steps.
We employ $\eta_l$ and $\eta_u$ as functions of $t$ instead of constant lower and upper bound, where $\eta_l(t)$ is a non-decreasing function that starts from $0$ as $t = 0$ and converges to $\alpha^*$ asymptotically; and $\eta_u(t)$ is a non-increasing function that starts from $\infty$ as $t = 0$ and also converges to $\alpha^*$ asymptotically.
In this setting, \textsc{AdaBound} behaves just like \textsc{Adam} at the beginning as the bounds have very little impact on learning rates, and it gradually transforms to \textsc{Sgd(M)} as the bounds become more and more restricted.
We prove the following key result for \textsc{AdaBound}.

\begin{theorem}
\label{the:AdaBound}
Let $\{x_t\}$ and $\{v_t\}$ be the sequences obtained from Algorithm~\ref{al:AdaBound}, $\beta_1 = \beta_{11}$, $\beta_{1t} \leq \beta_1$ for all $t \in [T]$ and $\beta_1 / \sqrt{\beta_2} < 1$.
Suppose $\eta_l(t+1) \geq \eta_l(t) > 0$, $\eta_u(t+1) \leq \eta_u(t)$, $\eta_l(t) \rightarrow \alpha^*$ as $t \rightarrow \infty$, $\eta_u(t) \rightarrow \alpha^*$ as $t \rightarrow \infty$, $L_\infty = \eta_l(1)$ and $R_\infty = \eta_u(1)$.
Assume that $\|x - y\|_\infty \leq D_\infty$ for all $x, y \in \mathcal{F}$ and $\|\nabla f_t(x)\| \leq G_2$ for all $t \in [T]$ and $x \in \mathcal{F}$.
For $x_t$ generated using the \textsc{AdaBound} algorithm, we have the following bound on the regret
\[
    R_T \leq \frac{D_\infty^2 \sqrt{T}}{2(1-\beta_1)} \sum_{i=1}^d \hat{\eta}_{T,i}^{-1} + \frac{D_\infty^2}{2(1-\beta_1)} \sum_{t=1}^T \sum_{i=1}^d \beta_{1t} \eta_{t,i}^{-1} + (2\sqrt{T} - 1) \frac{R_\infty G_2^2}{1-\beta_{1}}.
\]
\end{theorem}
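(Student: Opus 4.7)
My plan is to adapt the standard online-convex regret analysis used by \citet{ICLR18best} for \textsc{AMSGrad} to the clipped setting, exploiting the fact that the clipped vector $\hat{\eta}_t$ is entry-wise confined to $[\eta_l(t),\eta_u(t)] \subseteq [L_\infty, R_\infty]$. The starting point is convexity: for the comparator $x^* = \argmin_{x\in\mathcal{F}}\sum_t f_t(x)$, one has $f_t(x_t)-f_t(x^*) \leq \langle g_t, x_t-x^*\rangle$, so it suffices to bound $\sum_t \langle g_t, x_t-x^*\rangle$.

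First I would carry out a one-step analysis. Writing the update as $x_{t+1}=\Pi_{\mathcal{F},\mathrm{diag}(\eta_t^{-1})}(x_t-\eta_t\odot m_t)$ and using non-expansiveness of the projection in the $\mathrm{diag}(\eta_t^{-1})^{1/2}$-norm yields
\[
\|x_{t+1}-x^*\|_{\mathrm{diag}(\eta_t^{-1})}^2 \leq \|x_t-x^*\|_{\mathrm{diag}(\eta_t^{-1})}^2 - 2\langle m_t, x_t-x^*\rangle + \sum_{i=1}^d \eta_{t,i} m_{t,i}^2.
\]
Rearranging isolates $\langle m_t, x_t-x^*\rangle$ as the difference of a potential plus the quadratic term $\sum_i \eta_{t,i}m_{t,i}^2$. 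To move from $\langle m_t,\cdot\rangle$ to the desired $\langle g_t,\cdot\rangle$ I would unroll $m_t=\beta_{1t}m_{t-1}+(1-\beta_{1t})g_t$ and control the resulting ``momentum mismatch'' $\beta_{1t}\langle m_{t-1}, x_t-x^*\rangle$ by Young's inequality, which, after summing, contributes the second term $\frac{D_\infty^2}{2(1-\beta_1)}\sum_t\sum_i \beta_{1t}\eta_{t,i}^{-1}$ of the target bound.

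Summing over $t\in[T]$ and telescoping the potential produces $\frac{1}{2(1-\beta_1)}\sum_i \eta_{T,i}^{-1}\|x_{T+1}-x^*\|^2$ together with boundary contributions stemming from the fact that the preconditioner $\mathrm{diag}(\eta_t^{-1})$ changes with $t$. This is where the clipping hypotheses enter decisively: since $\eta_l$ is non-decreasing, $\eta_u$ is non-increasing, both converging to $\alpha^*$, and $\hat{\eta}_{t,i}\in[L_\infty,R_\infty]$, the total variation of $\hat{\eta}_{t,i}^{-1}$ is finite and bounded in terms of $\hat{\eta}_{T,i}^{-1}$. Using $\|x-y\|_\infty\leq D_\infty$ to absorb each crossing contribution, and the identity $\eta_{T,i}^{-1}=\sqrt{T}\,\hat{\eta}_{T,i}^{-1}$, yields the first term $\frac{D_\infty^2\sqrt{T}}{2(1-\beta_1)}\sum_i \hat{\eta}_{T,i}^{-1}$. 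For the remaining quadratic term, the upper clip is the key tool: $\hat{\eta}_{t,i}\leq \eta_u(t)\leq R_\infty$ gives $\eta_{t,i}\leq R_\infty/\sqrt{t}$; combined with $\|m_t\|\leq G_2$ (which follows because $m_t$ is a weighted combination of past gradients with weights summing to at most $1$, hence $\|m_t\|\leq \max_s \|g_s\|\leq G_2$) and $\sum_{t=1}^T t^{-1/2}\leq 2\sqrt{T}-1$, one obtains the third term $(2\sqrt{T}-1)\frac{R_\infty G_2^2}{1-\beta_1}$.

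The main obstacle I anticipate is the telescoping step. Unlike \textsc{AMSGrad}, where the preconditioner $\hat{v}_t$ is monotone by construction so that $\sum_t(\eta_{t,i}^{-1}-\eta_{t-1,i}^{-1})$ collapses cleanly, here the raw quantity $\alpha/\sqrt{V_t}$ need not be monotone, and only the clipped version $\hat{\eta}_t$ lives in the controlled band $[L_\infty,R_\infty]$. I would handle this by splitting the summation over $t$ into terms where the clip is active (where $\hat{\eta}_{t,i}=\eta_l(t)$ or $\eta_u(t)$, hence monotone by assumption) and terms where it is inactive (where $\hat{\eta}_{t,i}=\alpha/\sqrt{v_{t,i}}$); on the latter the condition $\beta_1/\sqrt{\beta_2}<1$ combined with standard Cauchy--Schwarz arguments analogous to those in \citet{ICLR18best} controls the $\beta_1$-weighted cross-terms, while on the former the monotonicity of $\eta_l,\eta_u$ makes the telescoping work directly. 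Gluing these contributions together and applying $\|x_{T+1}-x^*\|_\infty\leq D_\infty$ coordinate-wise delivers the claimed bound.
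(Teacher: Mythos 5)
Your proposal follows essentially the same route as the paper's proof: the same potential-function argument via non-expansiveness of the weighted projection (the paper's Lemma~\ref{lem:pro}), the same Young's-inequality treatment of the momentum cross-term $\beta_{1t}\langle m_{t-1},x_t-x^*\rangle$ yielding the middle term of the bound, and the same use of $\hat{\eta}_{t,i}\le R_\infty$ together with $\sum_{t=1}^T t^{-1/2}\le 2\sqrt{T}-1$ for the third term. Two points of divergence are worth noting. First, for the step-norm term you bound $\|m_t\|\le\max_s\|g_s\|\le G_2$ directly (valid, since $m_t$ is a convex combination of past gradients), whereas the paper routes through Jensen, Cauchy--Schwarz and a separate lemma showing $\sum_t\|m_t\|^2\le\sum_t\|g_t\|^2$; your version is more elementary and reaches the same constant. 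Second, and more substantively, at the telescoping step the paper simply asserts ``by definition of $\eta_t$, $\eta_{t,i}^{-1}\ge\eta_{t-1,i}^{-1}$'' and collapses $\sum_{t\ge 2}(x_{t,i}-x_i^*)^2(\eta_{t,i}^{-1}-\eta_{t-1,i}^{-1})$ in one line to get the $\sqrt{T}\sum_i\hat{\eta}_{T,i}^{-1}$ term. You are right to be suspicious here --- that monotonicity does not follow from the clipping alone, since $\hat{\eta}_{t-1,i}$ could sit at the lower clip $\eta_l(t-1)$ while $\hat{\eta}_{t,i}$ sits at the upper clip $\eta_u(t)$, and then $\sqrt{t}\,\hat{\eta}_{t,i}^{-1}<\sqrt{t-1}\,\hat{\eta}_{t-1,i}^{-1}$ is possible. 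However, your proposed repair is only a sketch: what you actually need to control is the sum of positive increments $\sum_t\bigl(\eta_{t,i}^{-1}-\eta_{t-1,i}^{-1}\bigr)_+$, and it is not established by your case-split that this is bounded by $\sqrt{T}\hat{\eta}_{T,i}^{-1}$ under only the stated hypotheses on $\eta_l,\eta_u$ (the condition $\beta_1/\sqrt{\beta_2}<1$ that you invoke plays no role in this step in the paper's argument either). So your outline reproduces the paper's proof where the paper is careful, and correctly flags --- but does not close --- the one step the paper handles by assertion.
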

The following result falls as an immediate corollary of the above result.
\begin{corollary}
Suppose $\beta_{1t} = \beta_1 \lambda^{t-1}$ in Theorem~\ref{the:AdaBound}, we have
\[
    R_T \leq \frac{D_\infty^2 \sqrt{T}}{2(1-\beta_1)} \sum_{i=1}^{d} \hat{\eta}_{T,i}^{-1}
    + \frac{\beta_1 d D_\infty^2}{2(1-\beta_1)(1-\lambda)^2 L_\infty}
    + (2\sqrt{T} - 1)\frac{R_\infty G_2^2}{1-\beta_1}.
\]
\end{corollary}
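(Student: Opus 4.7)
The plan is to take the bound supplied by Theorem~\ref{the:AdaBound} as a black box and show that the middle summation collapses to the stated $(1-\lambda)^{-2}$ factor once $\beta_{1t}$ is chosen geometrically. The first and third terms in Theorem~\ref{the:AdaBound} already match the corresponding terms in the corollary verbatim, so all of the work is in the middle term
\[
    S := \frac{D_\infty^2}{2(1-\beta_1)} \sum_{t=1}^T \sum_{i=1}^d \beta_{1t} \, \eta_{t,i}^{-1}.
\]

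The first ingredient is a uniform lower bound on $\eta_{t,i}$. Since $\hat{\eta}_{t,i}$ is the clipped quantity $\mathrm{Clip}(\alpha/\sqrt{V_{t,i}},\eta_l(t),\eta_u(t))$ we have $\hat{\eta}_{t,i} \geq \eta_l(t)$, and because $\eta_l$ is non-decreasing in $t$ (an assumption of Theorem~\ref{the:AdaBound}) we obtain $\hat{\eta}_{t,i} \geq \eta_l(1) = L_\infty$. Combining this with $\eta_{t,i} = \hat{\eta}_{t,i}/\sqrt{t}$ yields $\eta_{t,i}^{-1} \leq \sqrt{t}/L_\infty$ for every $t$ and every coordinate $i$. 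Substituting both this bound and $\beta_{1t} = \beta_1 \lambda^{t-1}$ into $S$ gives
\[
    S \leq \frac{\beta_1 \, d \, D_\infty^2}{2(1-\beta_1) L_\infty} \sum_{t=1}^T \lambda^{t-1} \sqrt{t}.
\]

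The second ingredient is the standard geometric-series identity $\sum_{t=1}^\infty \lambda^{t-1} t = 1/(1-\lambda)^2$, valid for $\lambda \in [0,1)$. Using $\sqrt{t} \leq t$ for $t \geq 1$ and then extending the sum to infinity gives $\sum_{t=1}^T \lambda^{t-1}\sqrt{t} \leq 1/(1-\lambda)^2$. Plugging this bound back into $S$ and inserting the resulting estimate into the statement of Theorem~\ref{the:AdaBound} yields the bound in the corollary.

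No step here is an obstacle in the usual sense; the only place that requires a moment of thought is realizing that one must lower-bound $\hat{\eta}_{t,i}$ by $L_\infty = \eta_l(1)$ (and not by the asymptotic $\alpha^\ast$, which could be arbitrarily close to $L_\infty$ or much larger) in order to handle the $1/\eta_{t,i}$ term uniformly over $t$. Everything else is a one-line application of the geometric sum identity, so the corollary follows directly from Theorem~\ref{the:AdaBound}.
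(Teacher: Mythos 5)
Your proof is correct and follows exactly the route the paper intends when it calls this "an immediate corollary": lower-bound $\hat{\eta}_{t,i}$ by $L_\infty=\eta_l(1)$ so that $\eta_{t,i}^{-1}\leq\sqrt{t}/L_\infty$, then control $\sum_{t}\lambda^{t-1}\sqrt{t}$ by the geometric-series identity $\sum_{t\geq1}t\lambda^{t-1}=(1-\lambda)^{-2}$. Nothing is missing.
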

It is easy to see that the regret of \textsc{AdaBound} is upper bounded by $O(\sqrt{T})$.
% Besides, the above bound can be considerably better than $O(\sqrt{dT})$ of \textsc{Sgd} when $\sum_{i=1}^{d} \eta_{T,i}^{-1} \ll \sqrt{d}$.
Similar to \citet{ICLR18best}, one can use a much more modest momentum decay of $\beta_{1t} = \beta_1/t$ and still ensure a regret of $O(\sqrt{T})$.
It should be mentioned that one can also incorporate the dynamic bound in \textsc{AMSGrad}. 
The resulting algorithm, namely \textsc{AMSBound}, also holds a regret of $O(\sqrt{T})$ and the proof of convergence is almost same to Theorem~\ref{the:AdaBound} (see Appendix~\ref{app:AMSBound} for details).
In next section we will see that \textsc{AMSBound} has similar performance to \textsc{AdaBound} in several well-known tasks.

We end this section with a comparison to the previous work.
For the idea of transforming \textsc{Adam} to \textsc{Sgd}, there is a similar work by \citet{Switch}.
The authors propose a measure that uses \textsc{Adam} at first and switches the algorithm to \textsc{Sgd} at some specific step.
Compared with their approach, our methods have two advantages.
First, whether there exists a fixed turning point to distinguish \textsc{Adam} and \textsc{Sgd} is uncertain.
% === deprecated ===
%it is hard to say that there exists some fixed time step $t$ where \textsc{Adam} is better than \textsc{Sgd} before $t$ or vice versa.
So we address this problem with a continuous transforming procedure rather than a ``hard'' switch.
Second, they introduce an extra hyperparameter to decide the switching time, which is not very easy to fine-tune.
As for our methods, the flexible parts introduced are two bound functions.
We conduct an empirical study of the impact of different kinds of bound functions.
The results are placed in Appendix~\ref{app:bound-functions} for we find that the convergence target $\alpha^*$ and convergence speed are not very important to the final results.
For the sake of clarity, we will use $\eta_l(t) = 0.1 - \frac{0.1}{(1-\beta_2)t+1}$ and $\eta_u(t) = 0.1 + \frac{0.1}{(1-\beta_2)t}$ in the rest of the paper unless otherwise specified.

\section{Experiments}
\label{sec:experiments}

In this section, we turn to an empirical study of different models to compare new variants with popular optimization methods including \textsc{Sgd(M)}, \textsc{AdaGrad}, \textsc{Adam}, and \textsc{AMSGrad}. 
We focus on three tasks: the MNIST image classification task \citep{MNIST}, the CIFAR-10 image classification task \citep{CIFAR}, and the language modeling task on Penn Treebank \citep{PTB}.
% === deprecated ===
% Particularly, two CNN models are employed on the CIFAR-10. 
We choose them due to their broad importance and availability of their architectures for reproducibility. 
The setup for each task is detailed in Table \ref{tab:models}.
% \footnote{Architectures can be found at the following links: (1) DenseNet \& ResNet: \url{https://github.com/kuangliu/pytorch-cifar}; (2) LSTM: \url{https://github.com/salesforce/awd-lstm-lm}.}
We run each experiment three times with the specified initialization method from random starting points.
A fixed budget on the number of epochs is assigned for training and the decay strategy is introduced in following parts. We choose the settings that achieve the lowest training loss at the end.
\begin{table}[tbh]
    \centering
    \caption{
        Summaries of the models utilized for our experiments.
    }
    \begin{tabular}{ccc}
         \toprule
         \textbf{Dataset} & \textbf{Network Type} & \textbf{Architecture} \\
         \midrule
         MNIST & Feedforward & 1-Layer Perceptron \\
         CIFAR-10 & Deep Convolutional & DenseNet-121 \\
         CIFAR-10 & Deep Convolutional & ResNet-34 \\
         Penn Treebank & Recurrent & 1-Layer LSTM \\
         Penn Treebank & Recurrent & 2-Layer LSTM \\
         Penn Treebank & Recurrent & 3-Layer LSTM \\
         \bottomrule
    \end{tabular}
    \label{tab:models}
\end{table}
% === Old Version: less space ===
% \begin{table}[tbh]
%     \centering
%     \small
%     \caption{
%         Summaries of the models utilized for our experiments.
%     }
%     \begin{tabular}{c|cccccc}
%          \toprule
%  \textbf{Model}       & Perceptron & DenseNet-121 & ResNet-34 & 1-layer LSTM & 2-layer LSTM & 3-layer LSTM \\
%          \midrule
% \textbf{Type} & FNN & Deep-CNN & Deep-CNN & RNN & RNN & RNN \\
% \textbf{Dataset} & MNIST & CIFAR-10 & CIFAR-10 & PTB & PTB & PTB \\
%          \bottomrule
%     \end{tabular}
%     \label{tab:models}
% \end{table}

\subsection{Hyperparameter Tuning}
Optimization hyperparameters can exert great impact on ultimate solutions found by optimization algorithms so here we describe how we tune them.
To tune the step size, we follow the method in \citet{marginal}.
We implement a logarithmically-spaced grid of five step sizes.
If the best performing parameter is at one of the extremes of the grid, we will try new grid points so that the best performing parameters are at one of the middle points in the grid. 
% === deprecated ===
% Since a grid search on a large space of hyperparameters is computationally infeasible even with substantial industrial resources, 
Specifically, we tune over hyperparameters in the following way.

\textbf{\textsc{Sgd(M)}}\quad
For tuning the step size of \textsc{Sgd(M)}, we first coarsely tune the step size on a logarithmic scale from $\{100, 10, 1, 0.1, 0.01\}$ and then fine-tune it.
Whether the momentum is used depends on the specific model but we set the momentum parameter to default value $0.9$ for all our experiments. 
We find this strategy effective given the vastly different scales of learning rates needed for different modalities.
For instance, \textsc{Sgd} with $\alpha = 10$ performs best for language modeling on PTB but for the ResNet-34 architecture on CIFAR-10, a learning rate of $0.1$ for \textsc{Sgd} is necessary.

\textbf{\textsc{AdaGrad}}\quad
The initial set of step sizes used for \textsc{AdaGrad} are: $\{5\text{e-}2, 1\text{e-}2, 5\text{e-}3, 1\text{e-}3, 5\text{e-}4\}$. 
For the initial accumulator value, we choose the recommended value as $0$.

\textbf{\textsc{Adam \& AMSGrad}}\quad
We employ the same hyperparameters for these two methods.
The initial step sizes are chosen from: $\{1\text{e-}2, 5\text{e-}3, 1\text{e-}3, 5\text{e-}4, 1\text{e-}4\}$.
We turn over $\beta_1$ values of $\{0.9,0.99\}$ and $\beta_2$ values of $\{0.99,0.999\}$.
We use for the perturbation value $\epsilon$ = $1$e-$8$.

\textbf{\textsc{AdaBound \& AMSBound}}\quad
% After we find all the best hypyerparameters for \textsc{Adam} and \textsc{AMSBound}, we directly apply them in our proposed methods.
We directly apply the default hyperparameters for \textsc{Adam} (a learning rate of $0.001$, $\beta_1 = 0.9$ and $\beta_2 = 0.999$) in our proposed methods.

Note that for other hyperparameters such as batch size, dropout probability, weight decay and so on, we choose them to match the recommendations of the respective base architectures.
% The eventual settings for each experiment are list in Appendix~\ref{app:hparam}.

\subsection{Feedforward Neural Network}
We train a simple fully connected neural network with one hidden layer for the multiclass classification problem on MNIST dataset.
We run $100$ epochs and omit the decay scheme for this experiment.

Figure~\ref{mnist} shows the learning curve for each optimization method on both the training and test set.
We find that for training, all algorithms can achieve the accuracy approaching $100\%$.
For the test part, \textsc{Sgd} performs slightly better than adaptive methods \textsc{Adam} and \textsc{AMSGrad}.
Our two proposed methods, \textsc{AdaBound} and \textsc{AMSBound}, display slight improvement, but compared with their prototypes there are still visible increases in test accuracy.

\begin{figure}[htb]
    \centering
    \begin{subfigure}[b]{0.48\linewidth}
        \centering
        \includegraphics[width=0.75\linewidth]{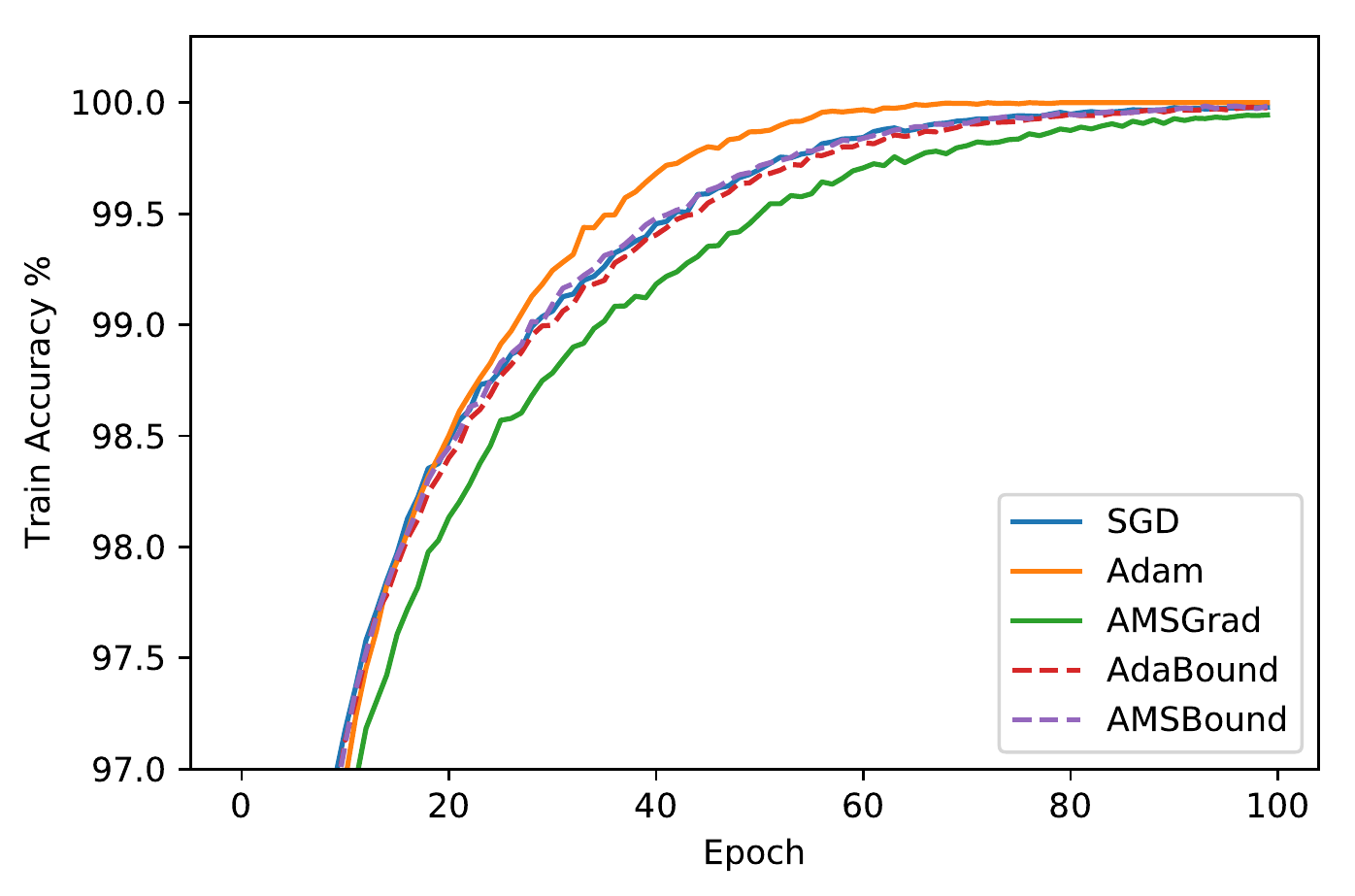}
        \caption{Training Accuracy}
    \end{subfigure}
    \begin{subfigure}[b]{0.48\linewidth}
        \centering
        \includegraphics[width=0.75\linewidth]{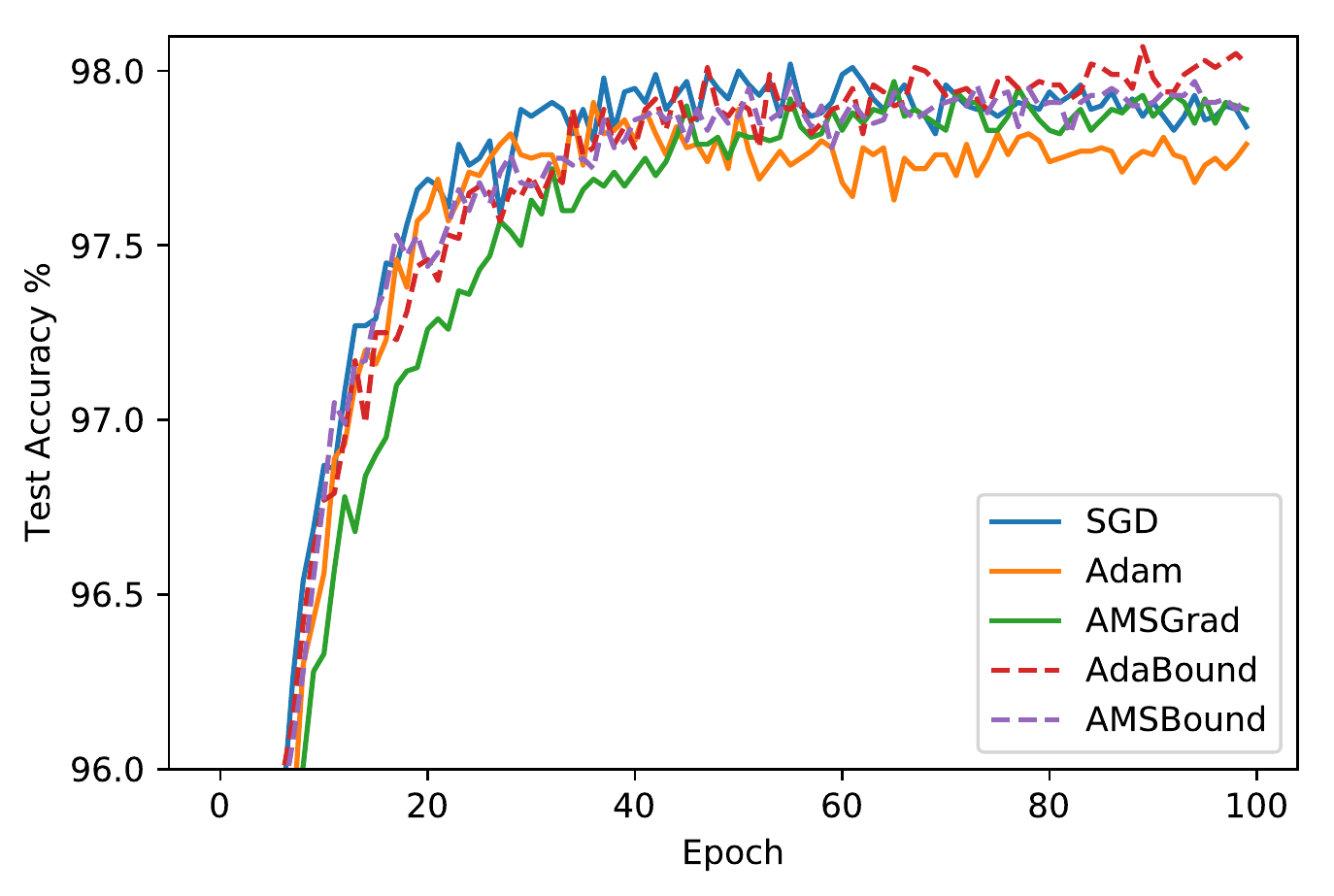}
        \caption{Test Accuracy}
    \end{subfigure}
    \caption{Training (left) and test accuracy (right) for feedforward neural network on MNIST.}
    \label{mnist}
\end{figure}
\subsection{Convolutional Neural Network}
Using DenseNet-121 \citep{DenseNet} and ResNet-34 \citep{ResNet}, we then consider the task of image classification on the standard CIFAR-10 dataset.
In this experiment, we employ the fixed budget of $200$ epochs and reduce the learning rates by $10$ after $150$ epochs. 

\begin{figure}[ht]
    \centering
    \begin{subfigure}[b]{0.48\linewidth}
        \centering
        \includegraphics[width=0.75\linewidth]{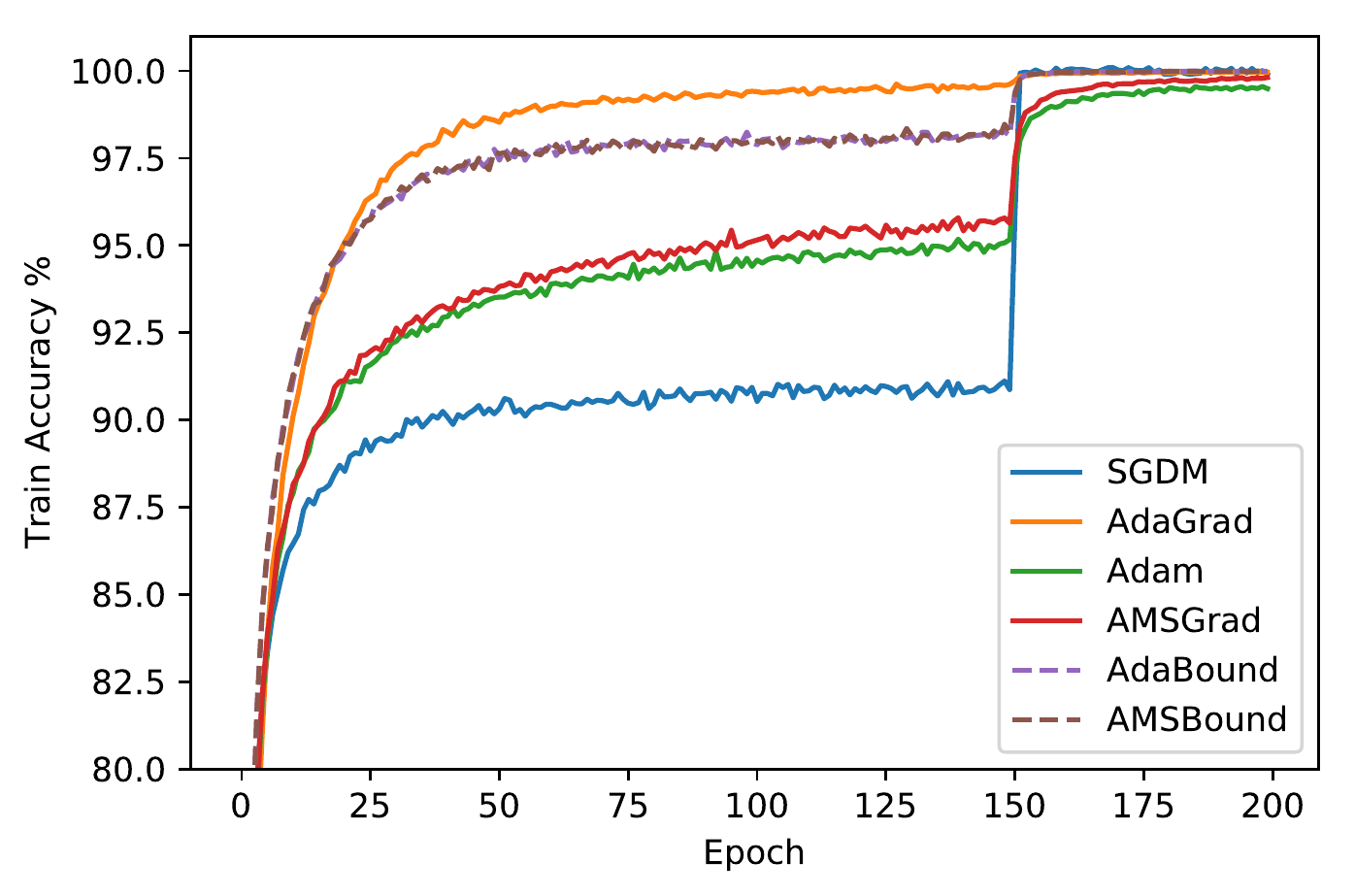}
        \caption{Training Accuracy for DenseNet-121}
    \end{subfigure}
    \begin{subfigure}[b]{0.48\linewidth}
        \centering
        \includegraphics[width=0.75\linewidth]{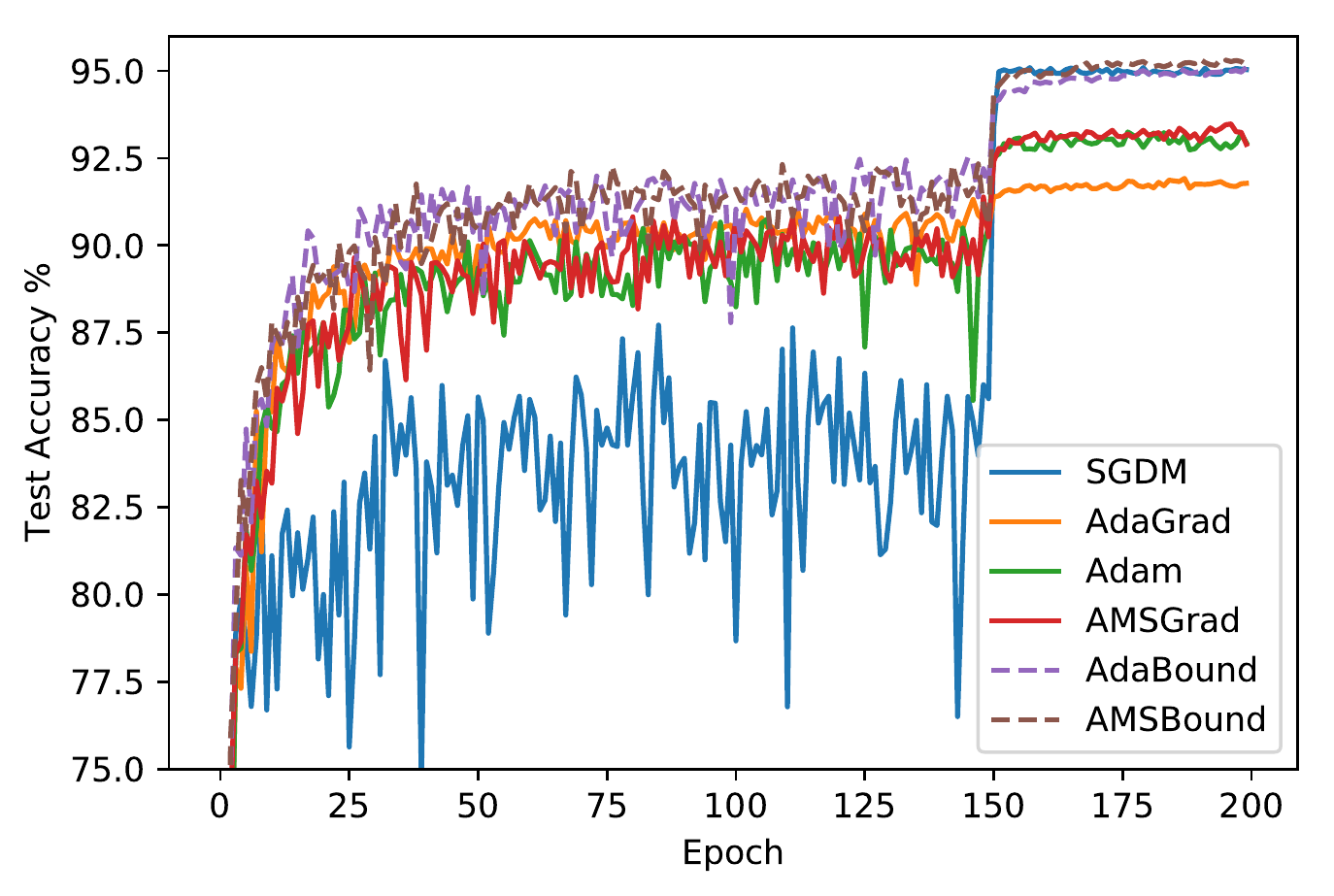}
        \caption{Test Accuracy for DenseNet-121}
    \end{subfigure}
    \begin{subfigure}[b]{0.48\linewidth}
        \centering
        \includegraphics[width=0.75\linewidth]{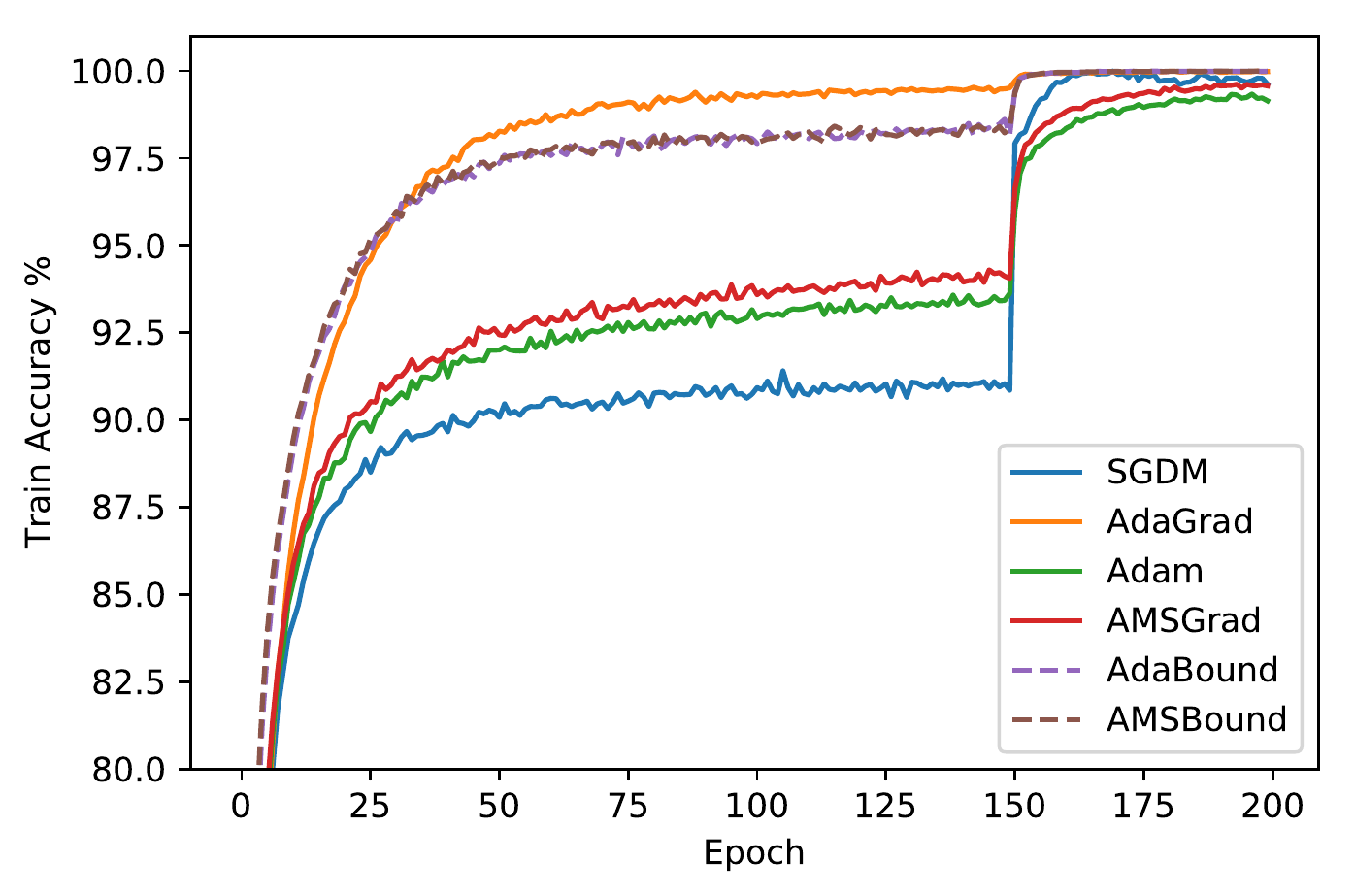}
        \caption{Training Accuracy for ResNet-34}
    \end{subfigure}
    \begin{subfigure}[b]{0.48\linewidth}
        \centering
        \includegraphics[width=0.75\linewidth]{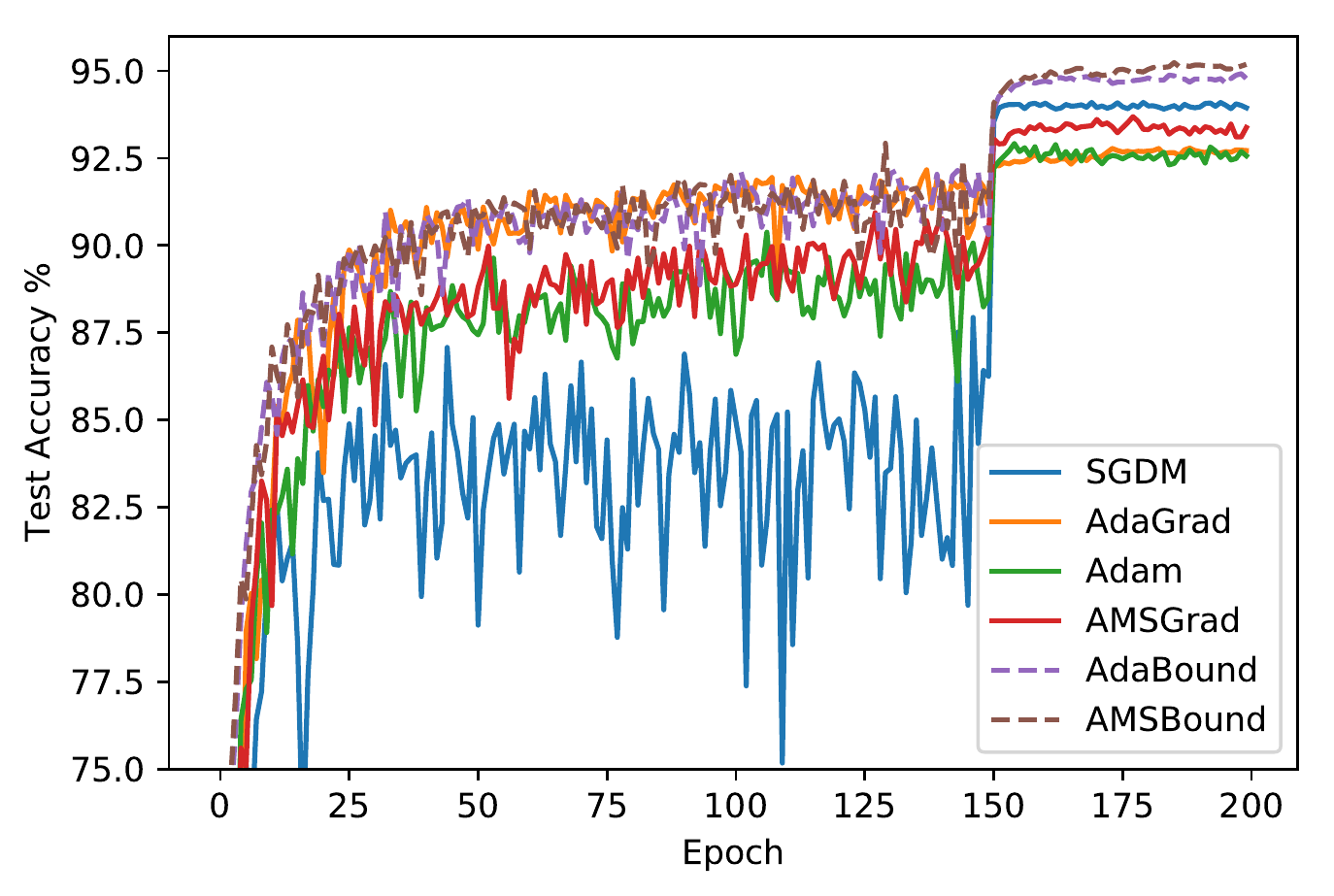}
        \caption{Test Accuracy for ResNet-34}
    \end{subfigure}
    \caption{
        Training and test accuracy for DenseNet-121 and ResNet-34 on CIFAR-10.
    }
    \label{cifar}
\end{figure}

\textbf{DenseNet}\quad 
We first run a DenseNet-121 model on CIFAR-10 and our results are shown in Figure~\ref{cifar}. 
We can see that adaptive methods such as \textsc{AdaGrad}, \textsc{Adam} and \textsc{AMSGrad} appear to perform better than the non-adaptive ones early in training. 
But by epoch $150$ when the learning rates are decayed, \textsc{SgdM} begins to outperform those adaptive methods. 
As for our methods, \textsc{AdaBound} and \textsc{AMSBound}, they converge as fast as adaptive ones and achieve a bit higher accuracy than \textsc{SgdM} on the test set at the end of training. 
In addition, compared with their prototypes, their performances are enhanced evidently with approximately $2\%$ improvement in the test accuracy.

\textbf{ResNet}\quad 
Results for this experiment are reported in Figure~\ref{cifar}. 
As is expected, the overall performance of each algorithm on ResNet-34 is similar to that on DenseNet-121. 
\textsc{AdaBound} and \textsc{AMSBound} even surpass \textsc{SgdM} by $1\%$.
Despite the relative bad generalization ability of adaptive methods, our proposed methods overcome this drawback by allocating bounds for their learning rates and obtain almost the best accuracy on the test set for both DenseNet and ResNet on CIFAR-10.

\subsection{Recurrent Neural Network}
Finally, we conduct an experiment on the language modeling task with Long Short-Term Memory (LSTM) network \citep{LSTM}.
From two experiments above, we observe that our methods show much more improvement in deep convolutional neural networks than in perceptrons.
Therefore, we suppose that the enhancement is related to the complexity of the architecture and run three models with (\textbf{L1}) 1-layer, (\textbf{L2}) 2-layer and (\textbf{L3}) 3-layer LSTM respectively.
We train them on Penn Treebank, running for a fixed budget of $200$ epochs.
We use perplexity as the metric to evaluate the performance and report results in Figure~\ref{ptb}.

\begin{figure}[ht]
    \centering
    \begin{subfigure}[b]{0.36\linewidth}
        \centering
        \includegraphics[width=\linewidth]{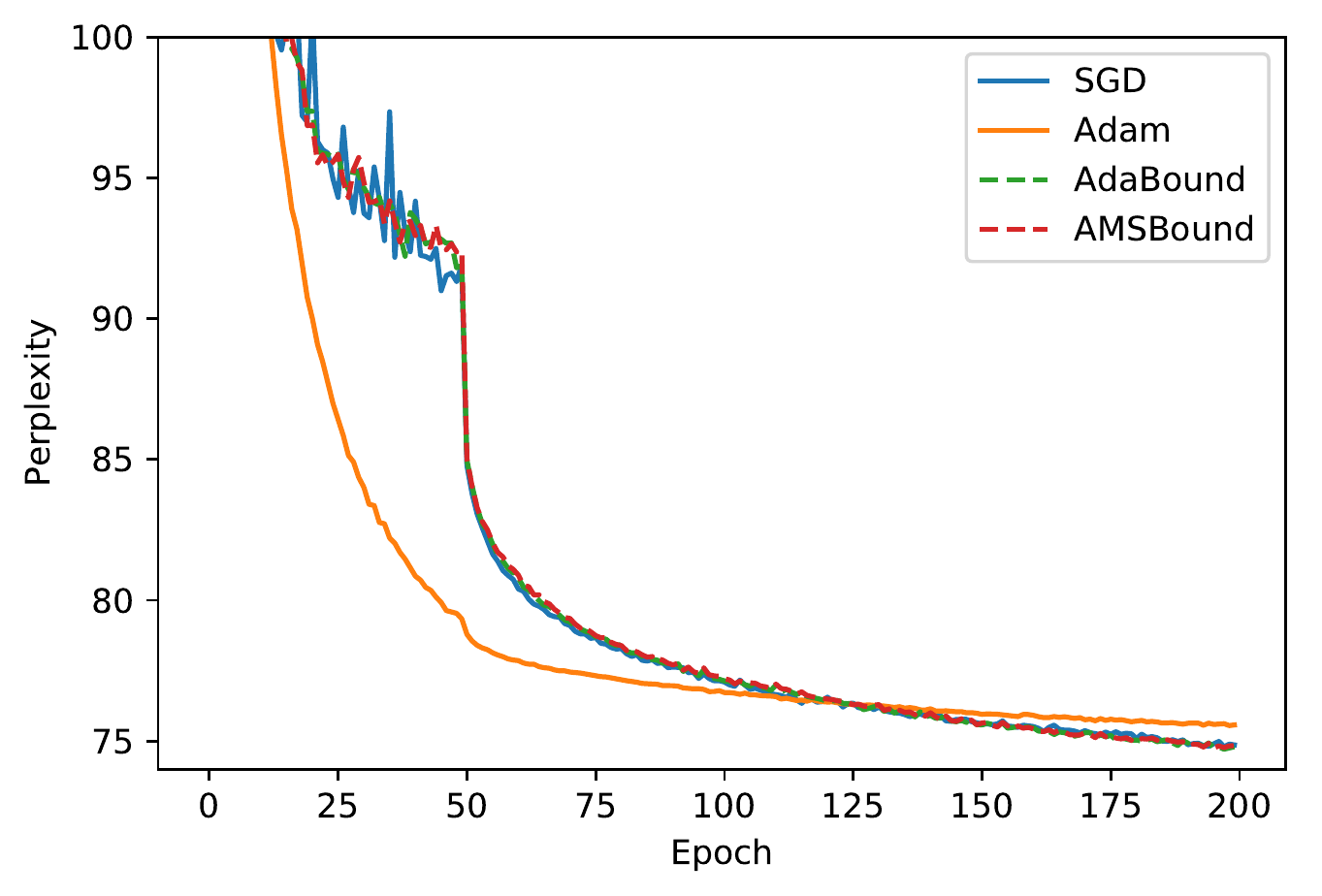}
        \caption{L1: 1-Layer LSTM}
    \end{subfigure}
    \begin{subfigure}[b]{0.36\linewidth}
        \centering
        \includegraphics[width=\linewidth]{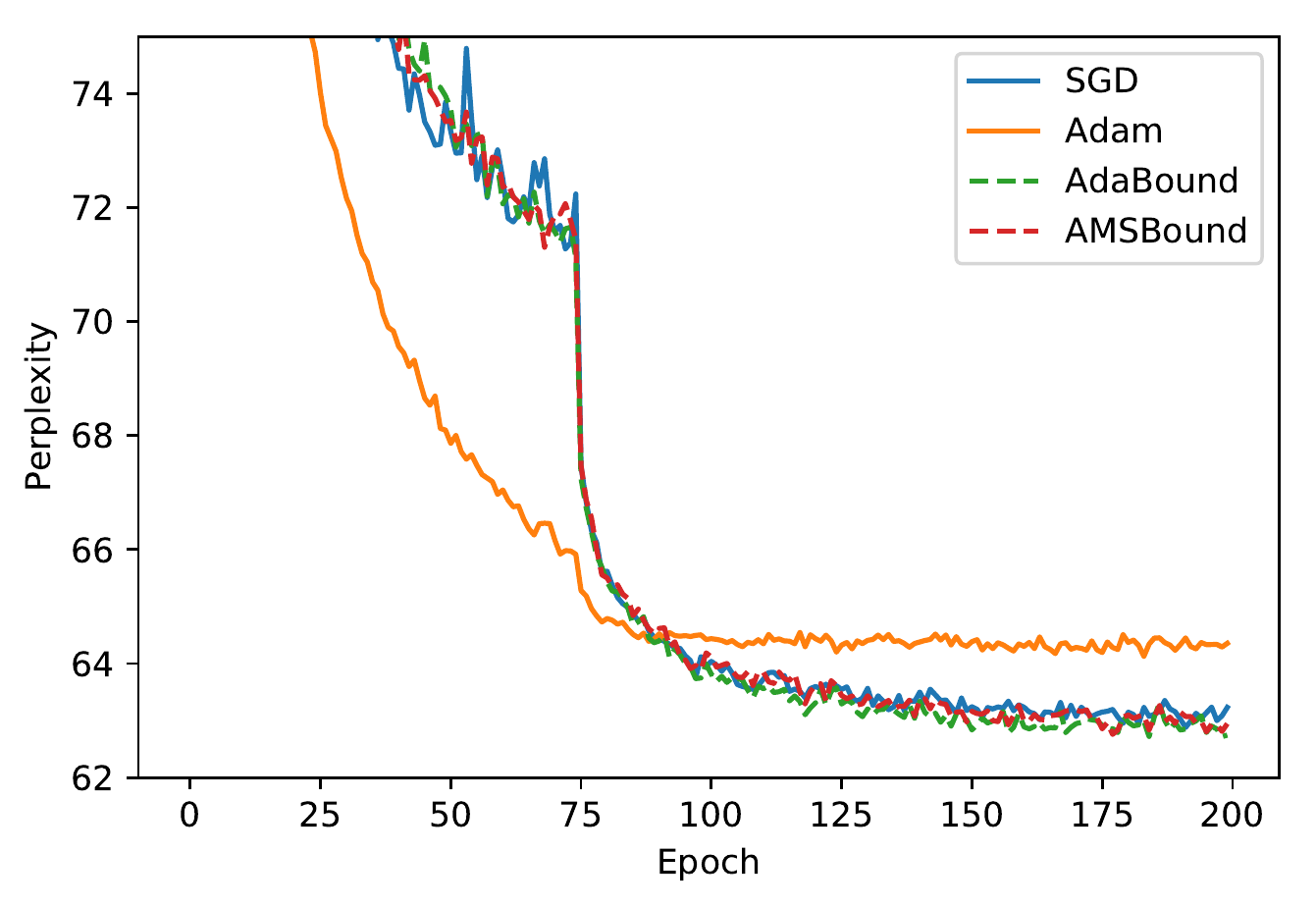}
        \caption{L2: 2-Layer LSTM}
    \end{subfigure}
    \begin{subfigure}[b]{0.36\linewidth}
        \centering
        \includegraphics[width=\linewidth]{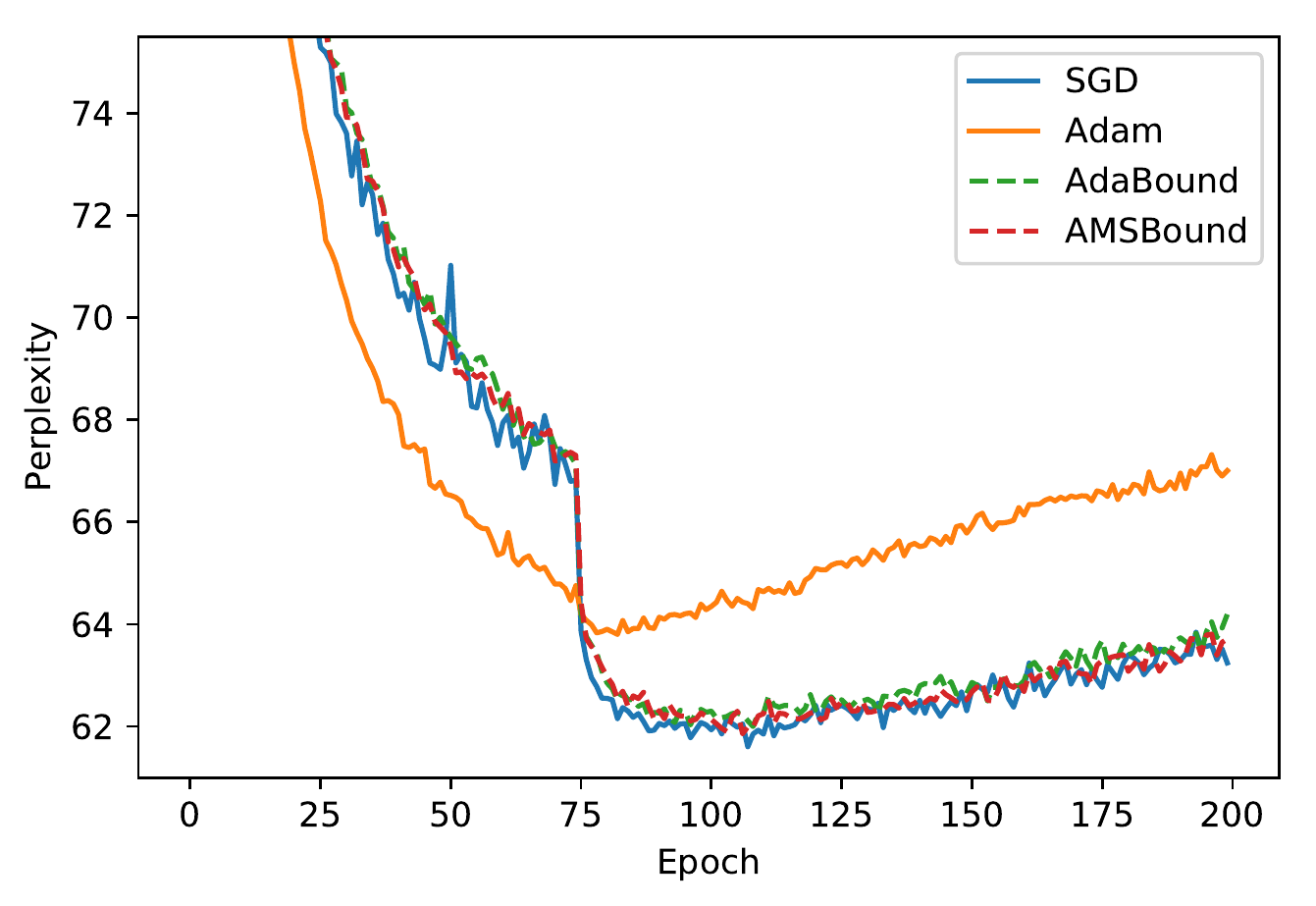}
        \caption{L3: 3-Layer LSTM}
    \end{subfigure}
    \caption{Perplexity curves on the test set comparing \textsc{Sgd}, \textsc{Adam}, \textsc{AdaBound} and \textsc{AMSBound} for the LSTM with different layers on Penn Treebank.}
    \label{ptb}
\end{figure}

We find that in all models, \textsc{Adam} has the fastest initial progress but stagnates in worse performance than \textsc{Sgd} and our methods.
Different from phenomena in previous experiments on the image classification tasks, \textsc{AdaBound} and \textsc{AMSBound} does not display rapid speed at the early training stage but the curves are smoother than that of \textsc{Sgd}.

Comparing L1, L2 and L3, we can easily notice a distinct difference of the improvement degree.
In L1, the simplest model, our methods perform slightly $1.1\%$ better than \textsc{Adam} while in L3, the most complex model, they show evident improvement over $2.8\%$ in terms of perplexity.
It serves as evidence for the relationship between the model's complexity and the improvement degree.

\subsection{Analysis}
To investigate the efficacy of our proposed algorithms, we select popular tasks from computer vision and natural language processing.
Based on results shown above, it is easy to find that \textsc{Adam} and \textsc{AMSGrad} usually perform similarly and the latter does not show much improvement for most cases.
Their variants, \textsc{AdaBound} and \textsc{AMSBound}, on the other hand, demonstrate a fast speed of convergence compared with \textsc{Sgd} while they also exceed two original methods greatly with respect to test accuracy at the end of training.
This phenomenon exactly confirms our view mentioned in Section 3 that both large and small learning rates can influence the convergence. %and indicates smaller ones might exert much more impact on the quality of a model.

Besides, we implement our experiments on models with different complexities, consisting of a perceptron, two deep convolutional neural networks and a recurrent neural network.
The perceptron used on the MNIST is the simplest and our methods perform slightly better than others.
As for DenseNet and ResNet, obvious increases in test accuracy can be observed.
We attribute this difference to the complexity of the model.
Specifically, for deep CNN models, convolutional and fully connected layers play different parts in the task.
% === deprecated ===
%Some layers are responsible for extracting features of images while others primarily take part in dividing images into different classes.
Also, different convolutional layers are likely to be responsible for different roles \citep{lee2009convolutional}, which may lead to a distinct variation of gradients of parameters.
In other words, extreme learning rates (huge or tiny) may appear more frequently in complex models such as ResNet.
As our algorithms are proposed to avoid them, the greater enhancement of performance in complex architectures can be explained intuitively.
The higher improvement degree on LSTM with more layers on language modeling task also consists with the above analysis.

\section{Future Work}
Despite superior results of our methods, there still remain several problems to explore.
For example, the improvement on simple models are not very inspiring, we can investigate how to achieve higher improvement on such models.
Besides, we only discuss reasons for the weak generalization ability of adaptive methods, however, why \textsc{Sgd} usually performs well across diverse applications of machine learning still remains uncertain.
Last but not least, applying dynamic bounds on learning rates is only one particular way to conduct gradual transformation from adaptive methods to \textsc{Sgd}.
There might be other ways such as well-designed decay that can also work, which remains to explore.

\section{Conclusion}
We investigate existing adaptive algorithms and find that extremely large or small learning rates can result in the poor convergence behavior.
A rigorous proof of non-convergence for \textsc{Adam} is provided to demonstrate the above problem.

Motivated by the strong generalization ability of \textsc{Sgd}, 
we design a strategy to constrain the learning rates of \textsc{Adam} and \textsc{AMSGrad} to avoid a violent oscillation.
Our proposed algorithms, \textsc{AdaBound} and \textsc{AMSBound}, which employ dynamic bounds on their learning rates, achieve a smooth transition to \textsc{Sgd}.
They show the great efficacy on several standard benchmarks while maintaining advantageous properties of adaptive methods such as rapid initial progress and hyperparameter insensitivity.
% === deprecated ===
%In fact, this strategy is generally applicable to other adaptive methods, e.g, \textsc{AdaGrad} and \textsc{RMSprop}.

\subsubsection*{Acknowledgments}
% Use unnumbered third level headings for the acknowledgments. All
% acknowledgments, including those to funding agencies, go at the end of the paper.
We thank all reviewers for providing the constructive suggestions.
We also thank Junyang Lin and Ruixuan Luo for proofreading and doing auxiliary experiments.
Xu Sun is the corresponding author of this paper.

\bibliography{adabound}
\bibliographystyle{iclr2019_conference}

\appendix
\newpage
\section*{Appendix}

\section{Auxiliary Lemmas}

\begin{lemma}[\citet{Mcmahan2010AdaptiveOptimization}]
\label{lem:pro}
For any $Q \in \mathcal{S}_+^d$ and convex feasible set $\mathcal{F} \subset \R^d$, suppose $u_1 = \min_{x \in \mathcal{F}} \|Q^{1/2}(x-z_1)\|$ and $u_2 = \min_{x \in \mathcal{F}} \|Q^{1/2}(x-z_2)\|$ then we have $\|Q^{1/2}(u_1-u_2)\| \leq \|Q^{1/2}(z_1-z_2)\|$. 
\end{lemma}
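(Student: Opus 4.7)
The plan is to identify $u_1$ and $u_2$ as the unique minimizers (argmins) of the strongly convex objectives $x \mapsto \tfrac{1}{2}\|Q^{1/2}(x-z_i)\|^2$ over the convex set $\mathcal{F}$, and then use the standard two-point first-order optimality trick that yields non-expansiveness of projections under any inner product. Concretely, I would first rewrite $\|Q^{1/2}(x-z_i)\|^2 = \langle x-z_i, Q(x-z_i)\rangle$, so that the gradient with respect to $x$ is $Q(x-z_i)$, and invoke the variational inequality for constrained convex minimization.

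From this, the optimality of $u_1$ and $u_2$ gives, for every $x \in \mathcal{F}$,
\[
    \langle Q(u_1 - z_1), x - u_1 \rangle \geq 0, \qquad \langle Q(u_2 - z_2), x - u_2 \rangle \geq 0.
\]
I would then substitute $x = u_2$ into the first inequality and $x = u_1$ into the second, and add. The $Q u_i$ cross-terms combine into $-\langle Q(u_1 - u_2), u_1 - u_2 \rangle = -\|Q^{1/2}(u_1-u_2)\|^2$, while the $Q z_i$ terms rearrange to $\langle Q(z_1 - z_2), u_1 - u_2\rangle$. This produces the inequality
\[
    \|Q^{1/2}(u_1 - u_2)\|^2 \leq \langle Q(z_1-z_2),\, u_1 - u_2\rangle = \langle Q^{1/2}(z_1-z_2),\, Q^{1/2}(u_1-u_2)\rangle.
\]

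To finish, I would apply Cauchy--Schwarz to the right-hand side in the standard Euclidean inner product and, in the case $\|Q^{1/2}(u_1-u_2)\| > 0$, divide through by $\|Q^{1/2}(u_1-u_2)\|$ to obtain the claimed bound; the degenerate case $u_1 = u_2$ is trivial. I do not anticipate any genuine obstacle here: the only mildly delicate point is to justify that $u_1, u_2$ are well-defined unique minimizers, which follows from strict convexity of $\|Q^{1/2}(\cdot - z_i)\|^2$ on $\R^d$ (guaranteed by $Q \in \mathcal{S}_+^d$) together with closedness and convexity of $\mathcal{F}$, and to be careful that the variational inequality is correctly written in the direction $x - u_i$ rather than $u_i - x$ before adding the two copies.
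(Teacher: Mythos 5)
Your proof is correct, and it is the standard non-expansiveness argument for the projection in the $Q$-inner product: the variational inequality $\langle Q(u_i - z_i),\, x - u_i\rangle \geq 0$ for all $x \in \mathcal{F}$, cross-substitution of $x = u_2$ and $x = u_1$, and Cauchy--Schwarz. It is worth comparing carefully with the paper's proof (transcribed from the \textsc{AMSGrad} appendix), because the two differ in more than style. The paper starts from the inequalities $\langle z_1 - u_1, Q(z_2 - z_1)\rangle \geq 0$ and $\langle z_2 - u_2, Q(z_1 - z_2)\rangle \geq 0$, attributed to ``the property of the projection operator''; these are not the standard variational inequalities and are false in general (e.g.\ $Q = I$, $\mathcal{F} = [0,1]$, $z_1 = 2$, $z_2 = 0.5$ gives $u_1 = 1$ and $\langle z_1 - u_1, z_2 - z_1\rangle = -1.5 < 0$). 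Moreover, chaining the paper's displayed inequalities as written yields $\|Q^{1/2}(z_2 - z_1)\|^2 \leq \langle u_2 - u_1, Q(z_2 - z_1)\rangle \leq \tfrac{1}{2}\bigl(\|Q^{1/2}(u_2-u_1)\|^2 + \|Q^{1/2}(z_2-z_1)\|^2\bigr)$, i.e.\ the \emph{reverse} of the claimed bound, so the directions in that transcription are garbled. Your route, starting from the correct optimality conditions $\langle Q(u_1 - z_1), u_2 - u_1\rangle \geq 0$ and $\langle Q(u_2 - z_2), u_1 - u_2\rangle \geq 0$ and arriving at $\|Q^{1/2}(u_1-u_2)\|^2 \leq \langle Q^{1/2}(z_1-z_2), Q^{1/2}(u_1-u_2)\rangle$, is the clean and correct version of the intended argument; finishing with Cauchy--Schwarz and dividing (treating $u_1 = u_2$ separately) is fine, and one could equivalently close with Young's inequality as the paper attempts. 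Your remark about well-definedness of the minimizers via strict convexity and closedness of $\mathcal{F}$ is a detail the paper omits entirely.
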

\begin{proof}
We provide the proof here for completeness. 
Since $u_{1} = min_{x \in \mathcal{F}} \left \| Q^{1/2}(x-z_{1}) \right \|$ and $u_{2} = min_{x\in \mathcal{F}} \left \| Q^{1/2}(x-z_{2}) \right \|$ and from the property of projection operator we have the following:
\[
    \left \langle z_{1} - u_{1}, Q(z_{2} - z_{1}) \right \rangle \geq 0  
    \text{ and }  
    \left \langle z_{2} - u_{2}, Q(z_{1} - z_{2}) \right \rangle \geq 0.
\]
Combining the above inequalities, we have
\begin{equation}
\label{lem:extra-1}
\left \langle u_{2} - u_{1}, Q(z_{2} - z_{1}) \right \rangle \geq \left \langle z_{2} - z_{1}, Q(z_{2} - z_{1}) \right \rangle.
\end{equation}
Also, observe the following:
\[
\left \langle u_{2} - u_{1}, Q(z_{2} - z_{1}) \right \rangle \leq \frac{1}{2}\left [ \left \langle u_{2} - u_{1}, Q(u_{2} - u_{1}) \right \rangle + \left \langle z_{2} - z_{1}, Q(z_{2} - z_{1}) \right \rangle \right ].
\]
The above inequality can be obtained from the fact that 
\[
\left \langle (u_{2} - u_{1}) - (z_{2} - z_{1}), Q((u_{2} - u_{1}) - (z_{2} - z_{1})) \right \rangle \geq 0 \text{ as } Q \in \mathcal{S}_{+}^{d}
\]
and rearranging the terms. 
Combining the above inequality with \Eqref{lem:extra-1}, we have the required the result.
\end{proof}

% The following lemma is a direct result of the first order momentum updating rule.
\begin{lemma}
\label{lem:momentum}
Suppose $m_t = \beta_1 m_{t-1} + (1-\beta_1)g_t$ with $m_0 = \bm{0}$ and $0 \leq \beta_1 < 1$. We have
\[
    \sum_{t=1}^{T} \|m_t\|^2 \leq \sum_{t=1}^{T} \|g_t\|^2.
\]
\end{lemma}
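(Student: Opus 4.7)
The plan is to exploit the fact that $m_t$ is a convex combination (up to a scaling factor at most $1$) of the past gradients, and then apply Jensen's inequality to pull the squared norm inside the weighted sum, after which a standard swap-of-summation yields the claim.

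First I would unroll the recursion to obtain the closed form
\[
    m_t = (1-\beta_1) \sum_{i=1}^{t} \beta_1^{t-i} g_i.
\]
Set $S_t = \sum_{i=1}^{t} \beta_1^{t-i} = (1-\beta_1^t)/(1-\beta_1)$, so that $\{\beta_1^{t-i}/S_t\}_{i=1}^{t}$ is a probability distribution on $\{1,\dots,t\}$. Writing
\[
    m_t = (1-\beta_1) S_t \sum_{i=1}^{t} \frac{\beta_1^{t-i}}{S_t}\, g_i
\]
and applying Jensen's inequality to the convex function $x \mapsto \|x\|^2$, I get
\[
    \|m_t\|^2 \leq (1-\beta_1)^2 S_t^{2} \sum_{i=1}^{t} \frac{\beta_1^{t-i}}{S_t} \|g_i\|^2 = (1-\beta_1)(1-\beta_1^t) \sum_{i=1}^{t} \beta_1^{t-i} \|g_i\|^2 \leq (1-\beta_1) \sum_{i=1}^{t} \beta_1^{t-i} \|g_i\|^2,
\]
where the last inequality uses $1-\beta_1^t \leq 1$.

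Finally I would sum over $t = 1, \dots, T$ and swap the order of summation:
\[
    \sum_{t=1}^{T} \|m_t\|^2 \leq (1-\beta_1) \sum_{i=1}^{T} \|g_i\|^2 \sum_{t=i}^{T} \beta_1^{t-i} = \sum_{i=1}^{T} (1-\beta_1^{T-i+1}) \|g_i\|^2 \leq \sum_{i=1}^{T} \|g_i\|^2,
\]
which is the desired bound. There is no real obstacle here; the only point that needs mild care is remembering to keep the factor $(1-\beta_1)$ in front when applying Jensen, so that it cancels cleanly against the geometric-series sum $\sum_{t=i}^{T} \beta_1^{t-i} \leq 1/(1-\beta_1)$ in the final step.
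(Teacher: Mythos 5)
Your proof is correct. It reaches the same key intermediate bound as the paper, namely $\|m_t\|^2 \leq (1-\beta_1)\sum_{i=1}^{t}\beta_1^{t-i}\|g_i\|^2$, and the final step (swapping the order of summation and bounding the geometric series) is identical to the paper's. The only genuine difference is how that per-step bound is obtained: you unroll the recursion to the closed form $m_t = (1-\beta_1)\sum_{i=1}^{t}\beta_1^{t-i}g_i$ and apply Jensen's inequality to the convex combination with weights $\beta_1^{t-i}/S_t$, whereas the paper works directly with the one-step recursion, applies Young's inequality to the cross term with the specific choice $\theta = 1/\beta_1 - 1$ to get $\|m_t\|^2 \leq \beta_1\|m_{t-1}\|^2 + (1-\beta_1)\|g_t\|^2$, and then unrolls that scalar recursion by dividing through by $\beta_1^t$. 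Your route is arguably cleaner and even yields the marginally sharper factor $(1-\beta_1^t)$, which you then discard; the paper's route avoids writing down the closed form and the normalization constant $S_t$, and handles the $\beta_1 = 0$ case by treating it separately (which you should also do, or at least note, since $S_t$ and the weights $\beta_1^{t-i}/S_t$ require the convention $0^0=1$ when $\beta_1=0$). Either argument is complete and standard.
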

\begin{proof}
If $\beta_1 = 0$, the equality directly holds due to $m_t = g_t$.
Otherwise, $0 < \beta_1 < 1$.
For any $\theta > 0$ we have
\begin{equation*}
\begin{split}
    \|m_t\|^2
        &= \|\beta_1 m_{t-1}\|^2 + \|(1-\beta_1)g_t\|^2 + 2\langle\beta_1 m_{t-1}, (1-\beta_1)g_t\rangle \\
        &\leq \|\beta_1 m_{t-1}\|^2 + \|(1-\beta_1)g_t\|^2 + \theta\|\beta_1 m_{t-1}\|^2 + 1/\theta\|(1-\beta_1)g_t\|^2 \\
        &= (1+\theta)\|\beta_1 m_{t-1}\|^2 + (1+1/\theta)\|(1-\beta_1)g_t\|^2
\end{split}
\end{equation*}
The inequality follows from Cauchy-Schwarz and Young's inequality.
In particular, let $\theta = 1/\beta_1 - 1$. 
Then we have
\[
    \|m_t\|^2 \leq \beta_1 \|m_{t-1}\|^2 + (1-\beta_1)\|g_t\|^2.
\]
Dividing both sides by $\beta_1^t$, we get
\[
    \frac{\|m_t\|^2}{\beta_1^t} \leq \frac{\|m_{t-1}\|^2}{\beta_1^{t-1}} + \frac{(1-\beta_1)\|g_t\|^2}{\beta_1^t}.
\]
Note that $m_0 = \bm{0}$.
Hence,
\[
    \frac{\|m_t\|^2}{\beta_1^t} \leq (1-\beta_1) \sum_{i=1}^{t} \|g_i\|^2 \beta_1^{-i}.
\]
Then multiplying both sides by $\beta_1^t$ we obtain
\[
    \|m_t\|^2 \leq (1-\beta_1) \sum_{i=1}^{t} \|g_i\|^2 \beta_1^{t-i}.
\]
Take the summation of above inequality over $t = 1, 2, \cdots, T$, we have
\begin{equation*}
\begin{split}
    \sum_{t=1}^{T} \|m_t\|^2 
        &\leq (1-\beta_1) \sum_{t=1}^{T} \sum_{i=1}^{t} \|g_i\|^2 \beta_1^{t-i} \\
        &= (1-\beta_1) \sum_{i=1}^{T} \sum_{t=i}^{T} \|g_i\|^2 \beta_1^{t-i} \\
        &\leq \sum_{t=1}^{T} \|g_t\|^2.
\end{split}
\end{equation*}
The second inequality is due to the following fact of geometric series
\[
    \sum_{i=0}^N \beta_1^i \leq \sum_{i=0}^\infty \beta_1^i = \frac{1}{1-\beta_1}, \text{ for } 0 < \beta_1 < 1.
\]
We complete the proof.

\end{proof}

\section{Proof of Theorem~\ref{the:spec-example}}

\begin{proof}
First, we rewrite the update of \textsc{Adam} in Algorithm~\ref{al:general} in the following recursion form:
\begin{equation}
\label{eq:Adam-recursion}
    m_{t,i} = \beta_1 m_{t-1,i} + (1-\beta_1)g_{t,i} \text{ and } v_{t,i} = \beta_2 v_{t-1,i} + (1-\beta_2)g_{t,i}^2
\end{equation}
where $m_{0,i}=0$ and $v_{0,i}=0$ for all $i \in [d]$ and $\psi_t = \mathrm{diag}(v_t)$.
We consider the setting where $f_t$ are linear functions and $\mathcal{F} = [-2,2]$.
In particular, we define the following function sequence:
\[
    f_{t}(x) = 
        \begin{cases}
            -100, & \text{ for } x < 0; \\
            0, & \text{ for } x > 1; \\
            -x, & \text{ for } 0 \leq x \leq 1 \text{ and } t \bmod C = 1; \\
            2x, & \text{ for } 0 \leq x \leq 1 \text{ and } t \bmod C = 2; \\
            0, & \text{otherwise}
        \end{cases}
\]
where $C \in \N$ satisfies the following:
\begin{equation}
\label{eq:proof-1-cond}
    5 \beta_2^{C-2} \leq \frac{1-\beta_2}{4-\beta_2}.
\end{equation}
It is not hard to see that the condition hold for large constant $C$ that depends on $\beta_2$.

Since the problem is one-dimensional, we drop indices representing coordinates from all quantities in Algorithm~\ref{al:general}.
For this function sequence, it is easy to see that any point satisfying $x < 0$ provides the minimum regret.
Without loss of generality, assume that the initial point is $x_1 = 0$. 
This can be assumed without any loss of generality because for any choice of initial point, we can always translate the coordinate system such that the initial point is $x_1 = 0$ in the new coordinate system and then choose the sequence of functions as above in the new coordinate system.
% As $\mathcal{F} = \R$, the projection operation is an identity operation, which means $x_t = \Pi_{\mathcal{F}, \sqrt{V_t}}(\hat{x}_t) = \hat{x}_t$.
% Therefore, we drop the hat mark in the iteration equation and $\hat{x}$ and $x$ are treated as equal.
Consider the execution of \textsc{Adam} algorithm for this sequence of functions with $\beta_1 = 0$.
Note that since gradients of these functions are bounded, $\mathcal{F}$ has bounded $D_\infty$ diameter and $\beta_1^2/\beta_2 < 1$ as $\beta_1 = 0$.
Hence, the conditions on the parameters required for \textsc{Adam} are satisfied \citep{Adam}.

Our claim is that for any initial step size $\alpha$, we have $x_{t} \geq 0$ for all $t \in \N$.
To prove this, we resort to the principle of mathematical induction.
Suppose for some $t \in \N$, we have $x_{Ct+1} \geq 0$.
Our aim is to prove that $x_i \geq 0$ for all $i \in \N \cap [Ct+2, Ct + C + 1]$.
It is obvious that the conditions holds if $x_{Ct+1} > 1$.
Now we assume $x_{Ct+1} \leq 1$.
We observe that the gradients have the following form:
\[
    \nabla f_{i}(x) = 
        \begin{cases}
            -1, & \text{ for } 0 \leq x \leq 1 \text{ and } i \bmod C = 1; \\
            2, & \text{ for } 0 \leq x \leq 1 \text{ and } i \bmod C = 2; \\
            0, & \text{otherwise.}
        \end{cases}
\]
From the $(Ct+1)$-th update of \textsc{Adam} in \Eqref{eq:Adam-recursion}, we obtain:
\[
    \hat{x}_{Ct+2} = x_{Ct+1} + \frac{\alpha}{\sqrt{Ct+1}}  \frac{1}{\sqrt{\beta_2 V_{Ct} + (1 - \beta_2)}} \geq 0.
\]
We consider the following two cases:
\begin{enumerate}
\item 
Suppose $\hat{x}_{Ct+2} > 1$, then $x_{Ct+2} = \Pi_{\mathcal{F}}(\hat{x}_{Ct+2}) = \min(\hat{x}_{Ct+2}, 2) > 1$ (note that in one-dimension, $\Pi_{\mathcal{F}, \sqrt{V_t}} = \Pi_{\mathcal{F}}$ is the simple Euclidean projection).
It is easy to see that $x_i = x_{Ct+2} \geq 0$ for all the following $i$ as all the following gradients will equal to $0$.

\item
Suppose $\hat{x}_{Ct+2} \leq 1$, then after the $(Ct+2)$-th update, we have:
\begin{equation*}
\begin{split}
    \hat{x}_{Ct+3} 
        &= x_{Ct+2} - \frac{\alpha}{\sqrt{Ct+2}} \frac{2}{\sqrt{\beta_2 V_{Ct+1} + 4(1 - \beta_2)}} \\
        &= x_{Ct+1} + \frac{\alpha}{\sqrt{Ct+1}}  \frac{1}{\sqrt{\beta_2 V_{Ct} + (1 - \beta_2)}} - \frac{\alpha}{\sqrt{Ct+2}} \frac{2}{\sqrt{\beta_2 V_{Ct+1} + 4(1 - \beta_2)}} \\
        &\geq \frac{\alpha \beta_2}{\sqrt{Ct+2} \sqrt{\beta_2 V_{Ct} + (1 - \beta_2)} \sqrt{\beta_2 V_{Ct+1} + 4(1 - \beta_2)}} (V_{Ct+1} - 4V_{Ct}) \geq 0,
\end{split}
\end{equation*}
which translates to $x_{Ct+3} = \hat{x}_{Ct+3} \geq 0$.
The first inequality follows from $x_{Ct+2} = \hat{x}_{Ct+2}$ and $x_{Ct+1} \geq 0$.
The last inequality is due to the following lower bound:
\begin{equation*}
\begin{split}
    V_{Ct+1} - 4V_{Ct}
        &= \beta_2 V_{Ct} + (1 - \beta_2) - 4V_{Ct} \\
        &= (4-\beta_2)\bigg[\frac{1-\beta_2}{4-\beta_2} - V_{Ct}\bigg] \\
        &= (4-\beta_2)\bigg[\frac{1-\beta_2}{4-\beta_2} - (1-\beta_2)(\sum_{i=1}^{t} \beta_2^{Ci-1} + 4\sum_{i=1}^{t} \beta_2^{Ci-2}) \bigg] \\
        &\geq (4-\beta_2)\bigg[\frac{1-\beta_2}{4-\beta_2} - (1-\beta_2)(\frac{\beta_2^{C-1}}{1-\beta_2^{C}} + \frac{4\beta_2^{C-2}}{1-\beta_2^{C}}) \bigg] \\
        &\geq (4-\beta_2)\bigg[\frac{1-\beta_2}{4-\beta_2} - 5\beta_2^{C-2} \bigg] \geq 0.
\end{split}
\end{equation*}
The second inequality follows from $\beta_2 < 1$.
The last inequality follows from \Eqref{eq:proof-1-cond}.

Furthermore, since gradients equal to $0$ when $x_i \geq 0 \text{ and } i \bmod C \neq 1 \text{ or } 2$, we have
\begin{align*}
    x_{Ct+4} &= \hat{x}_{Ct+3} = x_{Ct+3} \geq 0, \\
    x_{Ct+5} &= \hat{x}_{Ct+4} = x_{Ct+4} \geq 0, \\
    &\cdots \\
    x_{Ct+C+1} &= \hat{x}_{Ct+C+1} = x_{Ct+C} \geq 0.
\end{align*}
\end{enumerate}

Therefore, given $x_1 = 0$, it holds for all $t \in \N$ by the principle of mathematical induction.
Thus, we have
\[
    \sum_{i=1}^{C} f_{kC+i}(x_{kC+i}) - \sum_{i=1}^{C} f_{kC+i}(-2) \geq 0 - (-100C) = 100C,
\]
where $k \in \N$.
Therefore, for every $C$ steps, \textsc{Adam} suffers a regret of $100C$.
More specifically, $R_T \geq 100CT/C = 100T$.
Thus, $R_T/T \nrightarrow 0$ as $T \rightarrow \infty$, which completes the proof.
\end{proof}

\section{Proof of Theorem~\ref{the:gen-example}}

Theorem~\ref{the:gen-example} generalizes the optimization setting used in Theorem~\ref{the:spec-example}.
We notice that the example proposed by \citet{ICLR18best} in their Appendix~B already satisfies the constraints listed in Theorem~\ref{the:gen-example}.
Here we provide the setting of the example for completeness.

\begin{proof}
Consider the setting where $f_t$ are linear functions and $\mathcal{F} = [-1,1]$.
In particular, we define the following function sequence:
\[
    f_{t}(x) = 
        \begin{cases}
            Cx, \text{ for } t \bmod C = 1; \\
            -x, \text{ otherwise, } \\
        \end{cases}
\]
where $C \in \N$, $C \bmod 2 = 0$ satisfies the following:
\begin{equation*}
\begin{split}
    \left( 1 - \beta _ { 1 } \right) \beta _ { 1 } ^ { C - 1 } C \leq 1 - \beta _ { 1 } ^ { C - 1 }, & \\
    \beta _ { 2 } ^ { ( C - 2 ) / 2 } C ^ { 2 } \leq 1, & \\
    \frac { 3 \left( 1 - \beta _ { 1 } \right) } { 2 \sqrt { 1 - \beta _ { 2 } } } \left( 1 + \frac { \gamma \left( 1 - \gamma ^ { C - 1 } \right) } { 1 - \gamma } \right) + \frac { \beta _ { 1 } ^ { C / 2 - 1 } } { 1 - \beta _ { 1 } } < \frac { C } { 3 },
\end{split}
\end{equation*}
where $\gamma = \beta_1 / \sqrt{\beta_2} < 1$.
It is not hard to see that these conditions hold for large constant $C$ that depends on $\beta_1$ and $\beta_2$.
According to the proof given by \citet{ICLR18best} in their Appendix~B, in such a setting $R_T/T \nrightarrow 0$ as $T \rightarrow \infty$, which completes the proof.

\end{proof}

\section{Proof of Theorem~\ref{the:sto-example}}

The example proposed by \citet{ICLR18best} in their Appendix~C already satisfies the constraints listed in Theorem~\ref{the:sto-example}.
Here we provide the setting of the example for completeness.

\begin{proof}
Let $\delta$ be an arbitrary small positive constant.
Consider the following one dimensional stochastic optimization setting over the domain $[-1,1]$.
At each time step $t$, the function $f_t(x)$ is chosen as follows:
\[
    f_{t}(x) = 
        \begin{cases}
            Cx, \text{ with probability } p \coloneqq \frac{1+\delta}{C+1} \\
            -x, \text{ with probability } 1 - p, \\
        \end{cases}
\]
where $C$ is a large constant that depends on $\beta_1$, $\beta_2$ and $\delta$.
The expected function is $F(x) = \delta x$. 
Thus the optimal point over $[-1,1]$ is $x^* = -1$.
The step taken by \textsc{Adam} is
\[
    \Delta _ { t } = \frac { - \alpha _ { t } \left( \beta _ { 1 } m _ { t - 1 } + \left( 1 - \beta _ { 1 } \right) g _ { t } \right) } { \sqrt { \beta _ { 2 } v _ { t - 1 } + \left( 1 - \beta _ { 2 } \right) g _ { t } ^ { 2 } } }.
\]
According to the proof given by \citet{ICLR18best} in their Appendix~C, there exists a large enough $C$ such that $\E[\Delta_t] \geq 0$, which then implies that the \textsc{Adam}'s step keep drifting away from the optimal solution $x^* = -1$.
Note that there is no limitation of the initial step size $\alpha$ by now.
Therefore, we complete the proof.

\end{proof}

\section{Proof of Theorem~\ref{the:AdaBound}}
\label{app:proof-AdaBound}

\begin{proof}
Let $x^* = \argmin_{x \in \mathcal{F}} \sum_{t=1}^{T} f_t(x)$, which exists since $\mathcal{F}$ is closed and convex. 
We begin with the following observation:
\[
    x_{t+1} = \Pi_{\mathcal{F}, \mathrm{diag}(\eta_t^{-1})} (x_t - \eta_t \odot m_t) = \min_{x \in \mathcal{F}} \|\eta_t^{-1/2} \odot (x -  (x_t - \eta_t \odot m_t))\|.
\]
Using Lemma~\ref{lem:pro} with $u_1 = x_{t+1}$ and $u_2 = x^*$, we have the following:
\begin{equation*}
\begin{split}
    \|\eta_t^{-1/2} \odot (x_{t+1} - x^*)\|^2
        &\leq \|\eta_t^{-1/2} \odot (x_t - \eta_t \odot m_t - x^*)\|^2 \\
        &= \|\eta_t^{-1/2} \odot (x_t - x^*)\|^2 + \|\eta_t^{1/2} \odot m_t\|^2 - 2\langle m_t, x_t - x^* \rangle \\
        &= \|\eta_t^{-1/2} \odot (x_t - x^*)\|^2 + \|\eta_t^{1/2} \odot m_t\|^2 \\ & \quad\quad - 2\langle \beta_{1t}m_{t-1}+(1-\beta_{1t})g_t, x_t - x^* \rangle.
\end{split}
\end{equation*}
Rearranging the above inequality, we have
\begin{equation}
\label{eq:inner-product-bound}
\begin{split}
    \langle g_t, x_t - x^* \rangle
        &\leq \frac{1}{2(1-\beta_{1t})}\bigg[ \|\eta_t^{-1/2} \odot (x_t - x^*)\|^2 - \|\eta_t^{-1/2} \odot (x_{t+1} - x^*)\|^2 \bigg] \\& \quad\quad + \frac{1}{2(1-\beta_{1t})} \|\eta_t^{1/2} \odot m_t\|^2 + \frac{\beta_{1t}}{1-\beta_{1t}}\langle m_{t-1}, x_t - x^* \rangle \\
        &\leq \frac{1}{2(1-\beta_{1t})}\bigg[ \|\eta_t^{-1/2} \odot (x_t - x^*)\|^2 - \|\eta_t^{-1/2} \odot (x_{t+1} - x^*)\|^2 \bigg] \\& \quad\quad + \frac{1}{2(1-\beta_{1t})} \|\eta_t^{1/2} \odot m_t\|^2 + \frac{\beta_{1t}}{2(1-\beta_{1t})} \|\eta_t^{1/2} \odot m_{t-1}\|^2 \\& \quad\quad + \frac{\beta_{1t}}{2(1-\beta_{1t})} \|\eta_t^{-1/2} \odot (x_t - x^*)\|^2.
\end{split}
\end{equation}
The second inequality follows from simple application of Cauchy-Schwarz and Young's inequality.
We now use the standard approach of bounding the regret at each step using convexity of the functions $\{f_t\}_{t=1}^T$ in the following manner:
\begin{equation}
\label{eq:regret-1}
\begin{split}
    &\sum _ { t = 1 } ^ { T } f _ { t } \left( x _ { t } \right) - f _ { t } \left( x ^ { * } \right) \leq \sum _ { t = 1 } ^ { T } \left\langle g _ { t } , x _ { t } - x ^ { * } \right\rangle \\
    &\leq \sum_{t = 1}^{T} \Bigg[ \frac{1}{2(1-\beta_{1t})}\bigg[ \|\eta_t^{-1/2} \odot (x_t - x^*)\|^2 - \|\eta_t^{-1/2} \odot (x_{t+1} - x^*)\|^2 \bigg] \\& \quad\quad + \frac{1}{2(1-\beta_{1t})} \|\eta_t^{1/2} \odot m_t\|^2 + \frac{\beta_{1t}}{2(1-\beta_{1t})} \|\eta_t^{1/2} \odot m_{t-1}\|^2 \\& \quad\quad + \frac{\beta_{1t}}{2(1-\beta_{1t})} \|\eta_t^{-1/2} \odot (x_t - x^*)\|^2 \Bigg].
\end{split}
\end{equation}
The first inequality is due to the convexity of functions $\{f_t\}_{t=1}^T$.
The second inequality follows from the bound in \Eqref{eq:inner-product-bound}.
For further bounding this inequality, we need the following intermedia result.

\begin{lemma}
For the parameter settings and conditions assumed in Theorem~\ref{the:AdaBound}, we have
\[
    \sum_{t=1}^{T} \bigg[ \frac{1}{2(1-\beta_{1t})} \|\eta_t^{1/2} \odot m_t\|^2 + \frac{\beta_{1t}}{2(1-\beta_{1t})} \|\eta_t^{1/2} \odot m_{t-1}\|^2 \bigg] \leq (2\sqrt{T} - 1) \frac{R_\infty G_2^2}{1-\beta_{1}}.
\]
\end{lemma}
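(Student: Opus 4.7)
The plan is to bound each summand pointwise in $t$ by $R_\infty G_2^2/((1-\beta_1)\sqrt{t})$ and then invoke the standard estimate $\sum_{t=1}^T 1/\sqrt{t} \leq 2\sqrt{T}-1$. The argument rests on three elementary observations: every coordinate of $\eta_t$ is at most $R_\infty/\sqrt{t}$, the momentum $\|m_t\|$ is uniformly bounded by $G_2$, and the $\beta_{1t}$-dependent coefficients can be replaced by their $\beta_1$ counterparts.

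For the first, since $\eta_u$ is non-increasing, $\eta_u(t) \leq \eta_u(1) = R_\infty$, and the clipping step in Algorithm~\ref{al:AdaBound} forces $\hat\eta_{t,i} \leq R_\infty$ for every coordinate $i$. Dividing by $\sqrt{t}$ yields $\eta_{t,i} \leq R_\infty/\sqrt{t}$, whence for any vector $u \in \R^d$,
\[
\|\eta_t^{1/2} \odot u\|^2 \;=\; \sum_i \eta_{t,i}\, u_i^2 \;\leq\; \frac{R_\infty}{\sqrt{t}}\,\|u\|^2.
\]
For the second, $\beta_{1t} \in [0,\beta_1] \subset [0,1)$ makes $m_t = \beta_{1t} m_{t-1} + (1-\beta_{1t})g_t$ a convex combination; coupled with $m_0 = 0$, $\|g_t\| \leq G_2$, and the triangle inequality, a one-line induction yields $\|m_t\| \leq G_2$ for all $t$. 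For the third, $x \mapsto x/(1-x)$ is increasing on $[0,1)$, so $\beta_{1t}/(1-\beta_{1t}) \leq \beta_1/(1-\beta_1)$ and $1/(1-\beta_{1t}) \leq 1/(1-\beta_1)$.

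Assembling the pieces, each summand is at most
\[
\frac{1}{2(1-\beta_1)}\cdot\frac{R_\infty G_2^2}{\sqrt{t}} \;+\; \frac{\beta_1}{2(1-\beta_1)}\cdot\frac{R_\infty G_2^2}{\sqrt{t}} \;=\; \frac{1+\beta_1}{2(1-\beta_1)}\cdot\frac{R_\infty G_2^2}{\sqrt{t}} \;\leq\; \frac{R_\infty G_2^2}{(1-\beta_1)\sqrt{t}},
\]
using $(1+\beta_1)/2 \leq 1$. Summing over $t$ and applying the integral estimate $\sum_{t=1}^T t^{-1/2} \leq 1 + \int_1^T x^{-1/2}\,dx = 2\sqrt{T}-1$ then delivers the stated bound.

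I do not anticipate any real difficulty here; the lemma is essentially an assembly of three routine estimates. One could instead try to leverage Lemma~\ref{lem:momentum}, which telescopes $\sum_t \|m_t\|^2 \leq \sum_t \|g_t\|^2$, but that lemma is stated for a constant $\beta_1$, whereas the convex-combination argument above handles the variable $\beta_{1t}$ schedule of Theorem~\ref{the:AdaBound} directly and, since it provides the pointwise bound $\|m_t\| \leq G_2$, combines cleanly with the coordinatewise cap on $\eta_t$.
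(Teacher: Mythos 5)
Your proof is correct, and it departs from the paper's argument in the one step that actually matters. Both proofs open identically: the clipping step plus monotonicity of $\eta_u$ give $\eta_{t,i} \leq R_\infty/\sqrt{t}$, and $\beta_{1t} \leq \beta_1 < 1$ lets you replace the $\beta_{1t}$-dependent coefficients by $1/(1-\beta_1)$. The divergence is in how $\sum_t \|m_t\|^2/\sqrt{t}$ is controlled. The paper keeps the sum intact and runs a Jensen-then-Cauchy--Schwarz detour to reduce it to $\frac{1}{T}\sum_t\|m_t\|^2\cdot\sum_t t^{-1/2}$, then invokes its Lemma~\ref{lem:momentum} to bound $\sum_t\|m_t\|^2$ by $\sum_t\|g_t\|^2 \leq TG_2^2$. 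You instead prove the pointwise bound $\|m_t\|\leq G_2$ by the convex-combination induction and bound each summand separately by $R_\infty G_2^2/((1-\beta_1)\sqrt{t})$. Your route is both shorter and more robust: the paper's intermediate step $\sum_t \|m_t\|^2 t^{-1/2} \leq \frac{1}{T}\bigl[\sum_t \|m_t\| t^{-1/4}\bigr]^2$, attributed to Jensen, is actually the reverse of what Jensen/Cauchy--Schwarz gives (with $a_t = \|m_t\|t^{-1/4}$ one has $\frac{1}{T}(\sum_t a_t)^2 \leq \sum_t a_t^2$, not $\geq$), and the composite inequality it is meant to establish is not valid for arbitrary orderings of $\|m_t\|^2$ against the decreasing weights $t^{-1/2}$. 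Your pointwise bound $\|m_t\|\leq G_2$ sidesteps this entirely, handles the variable schedule $\beta_{1t}$ without appealing to the constant-$\beta_1$ Lemma~\ref{lem:momentum}, and still lands on the same constant $(2\sqrt{T}-1)R_\infty G_2^2/(1-\beta_1)$ via the standard integral estimate. The only thing the paper's aggregate bound $\sum_t\|m_t\|^2 \leq \sum_t\|g_t\|^2$ could in principle buy is a data-dependent bound in terms of the observed gradients rather than the worst-case $G_2$, but the lemma as stated does not exploit that, so nothing is lost.
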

\begin{proof}
By definition of $\eta_t$, we have
\[
    L_\infty \leq \sqrt{t} \|\eta_t\|_\infty \leq R_\infty.
\]
Hence,
\begin{equation*}
\begin{split}
    &\sum_{t=1}^{T} \bigg[ \frac{1}{2(1-\beta_{1t})} \|\eta_t^{1/2} \odot m_t\|^2 + \frac{\beta_{1t}}{2(1-\beta_{1t})} \|\eta_t^{1/2} \odot m_{t-1}\|^2 \bigg] \\
    &\leq \sum_{t=1}^{T} \bigg[ \frac{R_\infty}{2(1-\beta_{1t}) \sqrt{t}} \|m_t\|^2 + \frac{\beta_{1t} R_\infty}{2(1-\beta_{1t}) \sqrt{t}} \|m_{t-1}\|^2 \bigg] \\
    &\leq \sum_{t=1}^{T} \bigg[ \frac{R_\infty}{2(1-\beta_{1}) \sqrt{t}} \|m_t\|^2 + \frac{R_\infty}{2(1-\beta_{1}) \sqrt{t}} \|m_{t-1}\|^2 \bigg] \\
    &= \frac{R_\infty}{2(1-\beta_{1})} \Bigg[ \sum_{t=1}^{T} \frac{\|m_t\|^2}{\sqrt{t}} + \sum_{t=1}^{T} \frac{\|m_{t-1}\|^2}{\sqrt{t}} \Bigg] \\
    &\leq \frac{R_\infty}{2(1-\beta_{1})} \Bigg[ \frac{1}{T} \bigg[ \sum_{t=1}^{T} \|m_t\| t^{-1/4} \bigg]^2 + \frac{1}{T} \bigg[ \sum_{t=1}^{T} \|m_{t-1}\| t^{-1/4} \bigg]^2\Bigg] \\
    &\leq \frac{R_\infty}{2(1-\beta_{1}) T} \Bigg[ \sum_{t=1}^{T} \|m_t\|^2 \cdot \sum_{t=1}^{T} t^{-1/2} + \sum_{t=1}^{T} \|m_{t-1}\|^2 \cdot \sum_{t=1}^{T} t^{-1/2} \Bigg] \\
    &\leq \frac{R_\infty G_2^2}{(1-\beta_{1})} \sum_{t=1}^{T} t^{-1/2} \\
    &\leq (2\sqrt{T} - 1) \frac{R_\infty G_2^2}{(1-\beta_{1})}.
\end{split}
\end{equation*}
The second inequality is due to $\beta_{1t} \leq \beta_1 < 1$.
The third inequality follows from Jensen inequality and the fourth inequality follows from Cauchy-Schwarz inequality.
The fifth inequality follows from Lemma~\ref{lem:momentum} and $m_0 = 0$.
The last inequality is due to the following upper bound:
\begin{equation*}
    \sum_{t=1}^T \frac{1}{\sqrt{t}}
        \leq 1 + \int_{t=1}^T \frac{\mathrm{d}t}{\sqrt{t}}
        = 2\sqrt{T} - 1.
\end{equation*}
We complete the proof of this lemma.
\end{proof}
We now return to the proof of Theorem~\ref{the:AdaBound}.
Using the above lemma in \Eqref{eq:regret-1}, we have
\begin{equation}
\label{eq:regret-2}
\begin{split}
    &\sum _ { t = 1 } ^ { T } f _ { t } \left( x _ { t } \right) - f _ { t } \left( x ^ { * } \right) \\
    &\leq \sum_{t = 1}^{T} \Bigg[ \frac{1}{2(1-\beta_{1t})}\bigg[ \|\eta_t^{-1/2} \odot (x_t - x^*)\|^2 - \|\eta_t^{-1/2} \odot (x_{t+1} - x^*)\|^2 \bigg] \\& \quad\quad + \frac{\beta_{1t}}{2(1-\beta_{1t})} \|\eta_t^{-1/2} \odot (x_t - x^*)\|^2 \Bigg] + (2\sqrt{T} - 1) \frac{R_\infty G_2^2}{1-\beta_{1}} \\
    &\leq \frac{1}{2(1-\beta_1)} \Bigg[ \|\eta_t^{-1/2} \odot (x_1 - x^*)\|^2 + \sum_{t = 2}^{T} \bigg[ \|\eta_t^{-1/2} \odot (x_t - x^*)\|^2 - \|\eta_{t-1}^{-1/2} \odot (x_t - x^*)\|^2 \bigg] \Bigg] \\& \quad\quad + \sum_{t=1}^T \frac{\beta_{1t}}{2(1-\beta_1)} \|\eta_t^{-1/2} \odot (x_t - x^*)\|^2 + (2\sqrt{T} - 1) \frac{R_\infty G_2^2}{1-\beta_{1}} \\
    &= \frac{1}{2(1-\beta_1)} \Bigg[ \sum_{i=1}^d \eta_{1,i}^{-1} (x_{1,i} - x_i^*)^2 + \sum_{t=2}^T \sum_{i=1}^d (x_{t,i} - x_i^*)^2 \bigg[ \eta_{t,i}^{-1} - \eta_{t-1,i}^{-1} \bigg] \\& \quad\quad + \sum_{t=1}^T \sum_{i=1}^d \beta_{1t} (x_{t,i} - x_i^*)^2 \eta_{t,i}^{-1} \Bigg] + (2\sqrt{T} - 1) \frac{R_\infty G_2^2}{1-\beta_{1}}.
\end{split}
\end{equation}
The second inequality use the fact that $\beta_{1t} \leq \beta_1 < 1$.
In order to further simplify the bound in \Eqref{eq:regret-2}, we need to use telescopic sum.
We observe that, by definition of $\eta_t$, we have
\[
    \eta_{t,i}^{-1} \geq \eta_{t-1,i}^{-1}.
\]
Using the $D_\infty$ bound on the feasible region and making use of the above property in \Eqref{eq:regret-2}, we have
\begin{equation*}
\begin{split}
    &\sum _ { t = 1 } ^ { T } f _ { t } \left( x _ { t } \right) - f _ { t } \left( x ^ { * } \right) \\
    &\leq \frac{D_\infty^2}{2(1-\beta_1)} \Bigg[ \sum_{i=1}^d \eta_{1,i}^{-1} + \sum_{t=2}^T \sum_{i=1}^d \bigg[ \eta_{t,i}^{-1} - \eta_{t-1,i}^{-1} \bigg] + \sum_{t=1}^T \sum_{i=1}^d \beta_{1t} \eta_{t,i}^{-1} \Bigg] + (2\sqrt{T} - 1) \frac{R_\infty G_2^2}{1-\beta_{1}} \\
    &= \frac{D_\infty^2 \sqrt{T}}{2(1-\beta_1)} \sum_{i=1}^d \hat{\eta}_{T,i}^{-1} + \frac{D_\infty^2}{2(1-\beta_1)} \sum_{t=1}^T \sum_{i=1}^d \beta_{1t} \eta_{t,i}^{-1} + (2\sqrt{T} - 1) \frac{R_\infty G_2^2}{1-\beta_{1}}.
\end{split}
\end{equation*}
The equality follows from simple telescopic sum, which yields the desired result.
It is easy to see that the regret of \textsc{AdaBound} is upper bounded by $O(\sqrt{T})$.

\end{proof}

\section{\textsc{AMSBound}}
\label{app:AMSBound}

\begin{algorithm}[htb]
    \caption{\textsc{AMSBound}}
    \label{al:AMSBound}
    
    \textbf{Input:} $x_1 \in \mathcal{F}$, initial step size $\alpha$, $\{\beta_{1t}\}_{t=1}^T$, $\beta_2$, lower bound function $\eta_l$, upper bound function $\eta_u$
    
    \begin{algorithmic}[1]
    \State Set $m_0 = 0$, $v_0 = 0$ and $\hat{v}_0 = 0$
    \For{$t = 1$ \textbf{to} $T$}
    \State $g_t = \nabla f_t(x_t)$
    \State $m_t = \beta_{1t} m_{t-1} + (1-\beta_{1t})g_t$
    \State $v_t = \beta_2 v_{t-1} + (1-\beta_2)g_t^2$
    \State $\hat{v}_t = \max (\hat{v}_{t-1}, v_t)$ and $V_t = \mathrm{diag}(\hat{v}_t)$
    \State $\eta = \mathrm{Clip}(\alpha/\sqrt{V_t}, \eta_l(t), \eta_u(t))$ and $\eta_t = \eta / \sqrt{t}$
    \State $x_{t+1} = \Pi_{\mathcal{F}, \mathrm{diag}(\eta_t^{-1})} (x_t - \eta_t \odot m_t)$
    \EndFor
    \end{algorithmic}
\end{algorithm}

\begin{theorem}
\label{the:AMSBound}
Let $\{x_t\}$ and $\{v_t\}$ be the sequences obtained from Algorithm~\ref{al:AMSBound}, $\beta_1 = \beta_{11}$, $\beta_{1t} \leq \beta_1$ for all $t \in [T]$ and $\beta_1 / \sqrt{\beta_2} < 1$.
Suppose $\eta_l(t+1) \geq \eta_l(t) > 0$, $\eta_u(t+1) \leq \eta_u(t)$, $\eta_l(t) \rightarrow \alpha^*$ as $t \rightarrow \infty$, $\eta_u(t) \rightarrow \alpha^*$ as $t \rightarrow \infty$, $L_\infty = \eta_l(1)$ and $R_\infty = \eta_u(1)$.
Assume that $\|x - y\|_\infty \leq D_\infty$ for all $x, y \in \mathcal{F}$ and $\|\nabla f_t(x)\| \leq G_2$ for all $t \in [T]$ and $x \in \mathcal{F}$.
For $x_t$ generated using the \textsc{AdaBound} algorithm, we have the following bound on the regret
\[
    R_T \leq \frac{D_\infty^2 \sqrt{T}}{2(1-\beta_1)} \sum_{i=1}^{d} \eta_{T,i}^{-1}
    + \frac{D_\infty^2}{2(1-\beta_1)} \sum_{t=1}^{T} \sum_{i=1}^{d} \beta_{1t} \eta_{t,i}^{-1}
    + (2\sqrt{T} - 1)\frac{R_\infty G_2^2}{1-\beta_1}.
\]
\end{theorem}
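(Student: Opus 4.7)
The plan is to mirror the proof of Theorem~\ref{the:AdaBound} almost verbatim, isolating the single place where the AMSGrad-style $\max$ operation changes the argument. First I would fix $x^* = \argmin_{x\in\mathcal{F}}\sum_t f_t(x)$ and apply Lemma~\ref{lem:pro} with $Q = \mathrm{diag}(\eta_t^{-1})$, $u_1 = x_{t+1}$, $u_2 = x^*$ to obtain a Pythagorean-type inequality bounding $\|\eta_t^{-1/2}\odot(x_{t+1}-x^*)\|^2$ by $\|\eta_t^{-1/2}\odot(x_t-x^*)\|^2 + \|\eta_t^{1/2}\odot m_t\|^2 - 2\langle m_t, x_t-x^*\rangle$. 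Expanding $m_t = \beta_{1t}m_{t-1}+(1-\beta_{1t})g_t$ and invoking Cauchy--Schwarz plus Young's inequality on the cross term $\langle m_{t-1},x_t-x^*\rangle$ yields the same per-step bound on $\langle g_t,x_t-x^*\rangle$ as in the \textsc{AdaBound} proof. Summing this bound via convexity, $f_t(x_t)-f_t(x^*)\le\langle g_t,x_t-x^*\rangle$, produces the regret decomposition with three groups: distance terms, momentum-squared terms, and a cross-momentum term.

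Next I would control the momentum-squared contribution $\sum_t\bigl[\tfrac{1}{2(1-\beta_{1t})}\|\eta_t^{1/2}\odot m_t\|^2+\tfrac{\beta_{1t}}{2(1-\beta_{1t})}\|\eta_t^{1/2}\odot m_{t-1}\|^2\bigr]$ by the $(2\sqrt{T}-1)R_\infty G_2^2/(1-\beta_1)$ term. This step is insensitive to the change from $v_t$ to $\hat{v}_t$: since $\eta_u$ is non-increasing and $\hat{\eta}_{t,i} = \mathrm{Clip}(\alpha/\sqrt{\hat{v}_{t,i}},\eta_l(t),\eta_u(t))\le \eta_u(1)=R_\infty$, we still have $\|\eta_t\|_\infty\le R_\infty/\sqrt{t}$. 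The chain of Jensen, Cauchy--Schwarz, Lemma~\ref{lem:momentum}, and $\sum_{t=1}^T t^{-1/2}\le 2\sqrt{T}-1$ then goes through unchanged.

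The last step is the telescoping sum over the distance terms, which is where the AMSGrad-style update earns its keep. Because $\hat{v}_t = \max(\hat{v}_{t-1},v_t)$, the sequence $\hat{v}_{t,i}$ is coordinatewise non-decreasing, so $\alpha/\sqrt{\hat{v}_{t,i}}$ is non-increasing; combined with $\eta_u(t+1)\le\eta_u(t)$ and the $\sqrt{t}$ scaling, one obtains the key monotonicity $\eta_{t,i}^{-1}\ge\eta_{t-1,i}^{-1}$ that allows collapsing $\sum_{t=2}^T\sum_i(x_{t,i}-x^*_i)^2(\eta_{t,i}^{-1}-\eta_{t-1,i}^{-1})$ through $\|x-y\|_\infty\le D_\infty$ to a telescoped difference dominated by $D_\infty^2\sum_i \eta_{T,i}^{-1}$. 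Combining this with the already-collected $\beta_{1t}$ term and the momentum bound from the previous paragraph yields the claimed inequality.

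The main technical obstacle I anticipate is rigorously verifying the monotonicity $\eta_{t,i}^{-1}\ge\eta_{t-1,i}^{-1}$ in the presence of the time-varying clip window $[\eta_l(t),\eta_u(t)]$: even though the unclipped quantity is non-increasing, a non-decreasing lower envelope $\eta_l$ could in principle push $\hat{\eta}_{t,i}$ upward at a single step, and one has to check that the $\sqrt{t}/\sqrt{t-1}$ factor from the step-size decay comfortably absorbs any such increase given the hypotheses on $\eta_l,\eta_u$. Once that monotonicity is established cleanly, the rest of the argument is a direct transcription of the \textsc{AdaBound} proof, and the $O(\sqrt{T})$ regret follows as an immediate corollary.
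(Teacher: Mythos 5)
Your proposal follows exactly the route the paper takes: the paper gives no separate argument for this theorem, deferring entirely to the proof of Theorem~\ref{the:AdaBound} in Appendix~\ref{app:proof-AdaBound}, and your step-by-step transcription (Lemma~\ref{lem:pro} for the projection, the momentum bound via Lemma~\ref{lem:momentum} and $\sum_t t^{-1/2}\le 2\sqrt{T}-1$, then the telescoping over the distance terms) matches that proof. The monotonicity $\eta_{t,i}^{-1}\ge\eta_{t-1,i}^{-1}$ that you flag as the main remaining obstacle is simply asserted ``by definition of $\eta_t$'' in the paper's own argument, so on this point you are, if anything, more careful than the published proof rather than diverging from it.
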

The regret of \textsc{AMSBound} has the same upper bound with that of \textsc{AdaBound}.\footnote{One may refer to Appendix~\ref{app:proof-AdaBound} as the process of the proof is almost the same.}

\section{Empirical Study on Bound Functions}
\label{app:bound-functions}

Here we provide an empirical study on different kinds of bound functions.
We consider the following two key factors of the bound function: convergence speed and convergence target.
The former one affects how ``fast'' our algorithms transform from adaptive methods to \textsc{Sgd(M)}, while the latter one reflects the final step size of \textsc{Sgd(M)}.
In particular, we consider the following bound functions:
\begin{gather*}
    \eta_l(t) = (1 - \frac{1}{(1-\beta)t + 1}) \alpha^*, \\
    \eta_u(t) = (1 + \frac{1}{(1-\beta)t}) \alpha^*,
\end{gather*}
where the above functions will converge to $\alpha^*$ and the larger $\beta$ results in lower convergence speed.

\begin{figure}[thb]
    \centering
    \begin{subfigure}[b]{0.48\linewidth}
        \centering
        \includegraphics[width=0.75\linewidth]{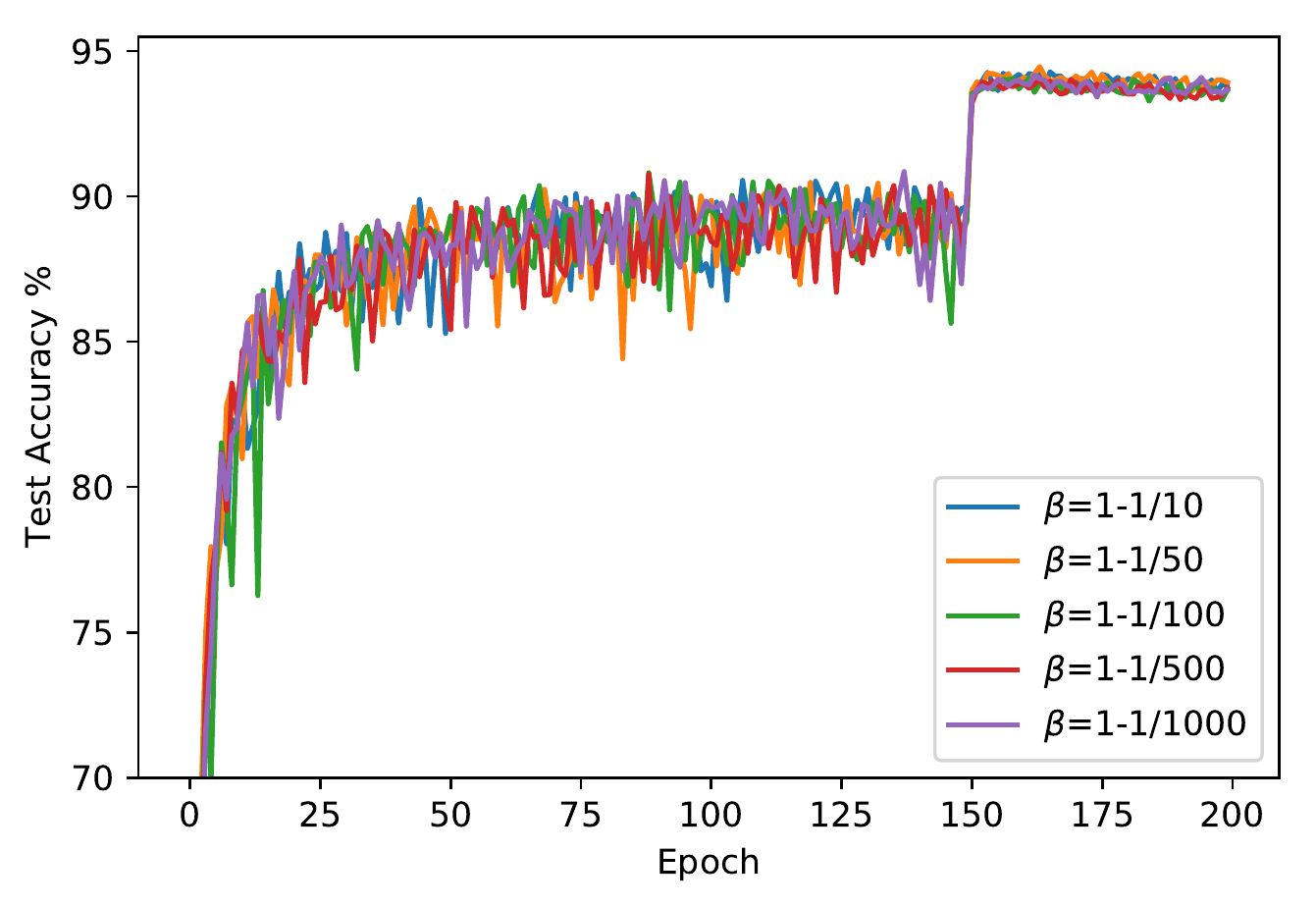}
        \caption{\textsc{AdaBound} with $\alpha^*=0.1$}
    \end{subfigure}
    \begin{subfigure}[b]{0.48\linewidth}
        \centering
        \includegraphics[width=0.75\linewidth]{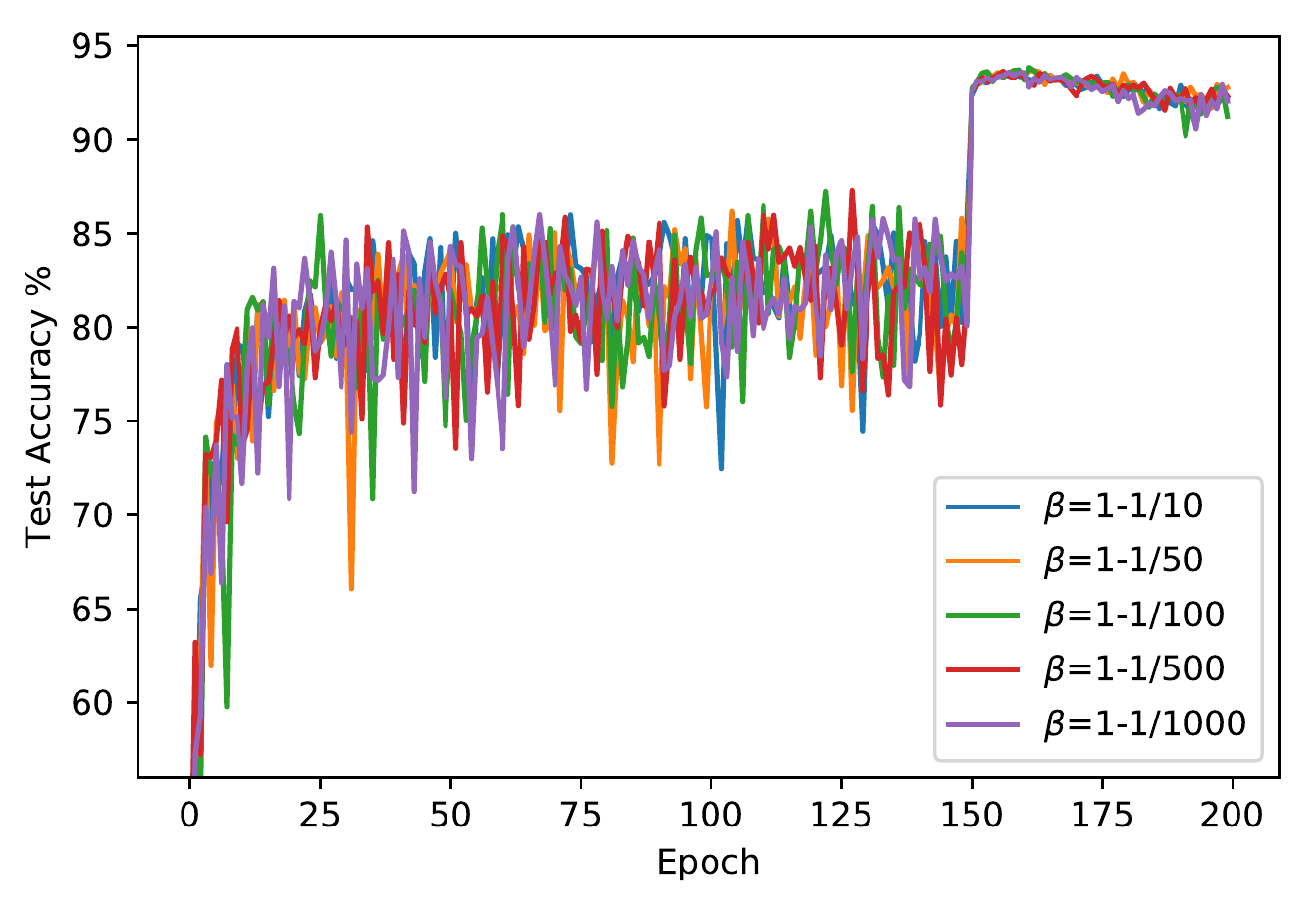}
        \caption{\textsc{AdaBound} with $\alpha^*=1$}
    \end{subfigure}
    \caption{Test accuracy of \textsc{AdaBound} with different $\beta$ using ResNet-34 on CIFAR-10.}
    \label{fig:beta}
\end{figure}

We first investigate the impact of convergence speed.
We conduct an experiment of \textsc{AdaBound} on CIFAR-10 dataset with the ResNet-34 model, where $\beta$ is chosen in $\{1-\frac{1}{10}, 1-\frac{1}{50}, 1-\frac{1}{100}, 1-\frac{1}{500}, 1-\frac{1}{1000}\}$ and $\alpha^*$ is chosen from $\{1, 0.1\}$.
The results are shown in Figure~\ref{fig:beta}.
We can see that for a specific $\alpha^*$, the performances with different $\beta$ are almost the same .
It indicates that the convergence speed of bound functions does not affect the final result to some extent.
We find a $\beta$ in $[\beta_1, \beta_2]$ usually contributes to a strong performance across all models.

\begin{figure}[hbt]
    \centering
    \begin{subfigure}[b]{0.48\linewidth}
        \centering
        \includegraphics[width=0.75\linewidth]{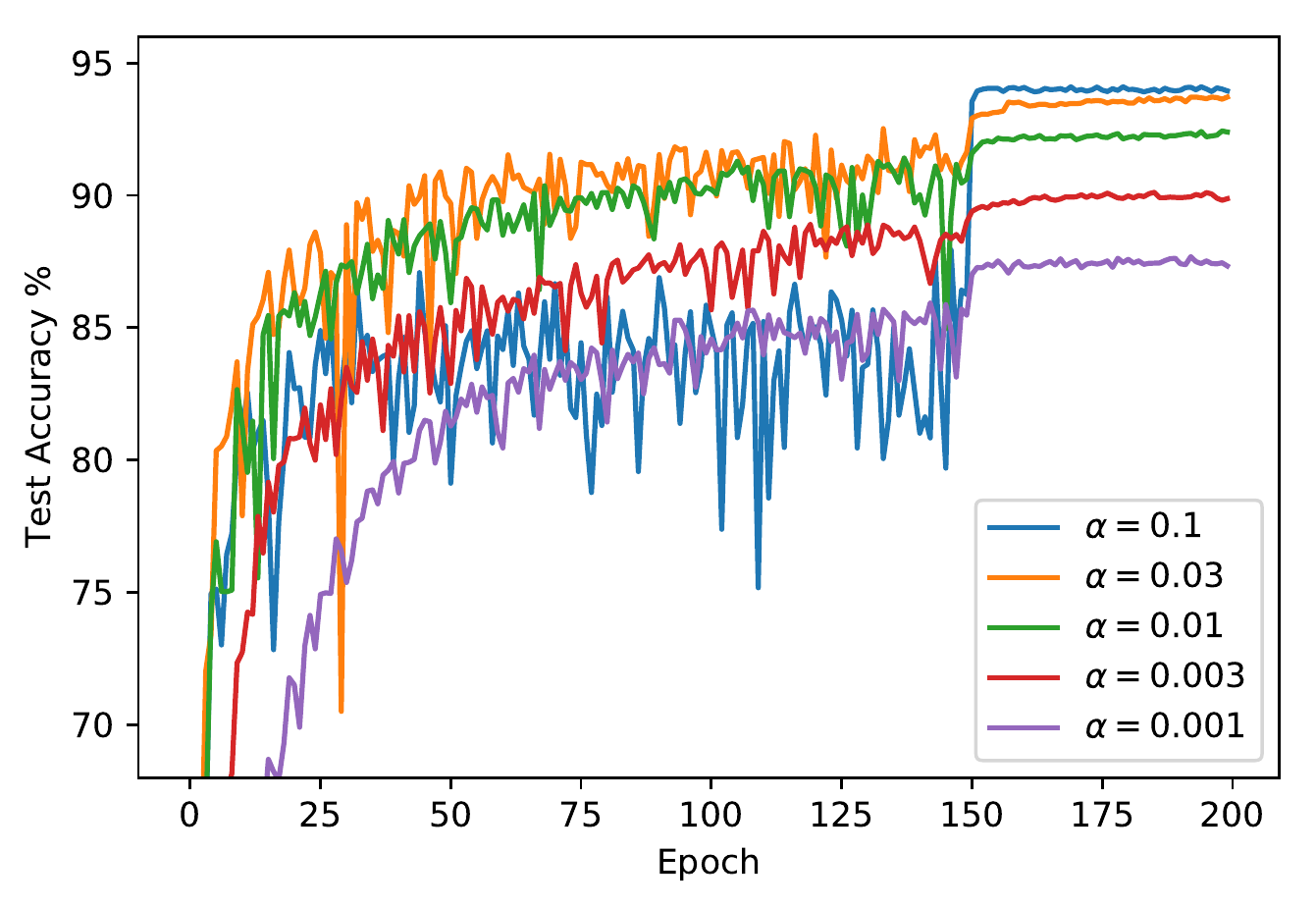}
        \caption{\textsc{SgdM} with different $\alpha$}
    \end{subfigure}
    \begin{subfigure}[b]{0.48\linewidth}
        \centering
        \includegraphics[width=0.75\linewidth]{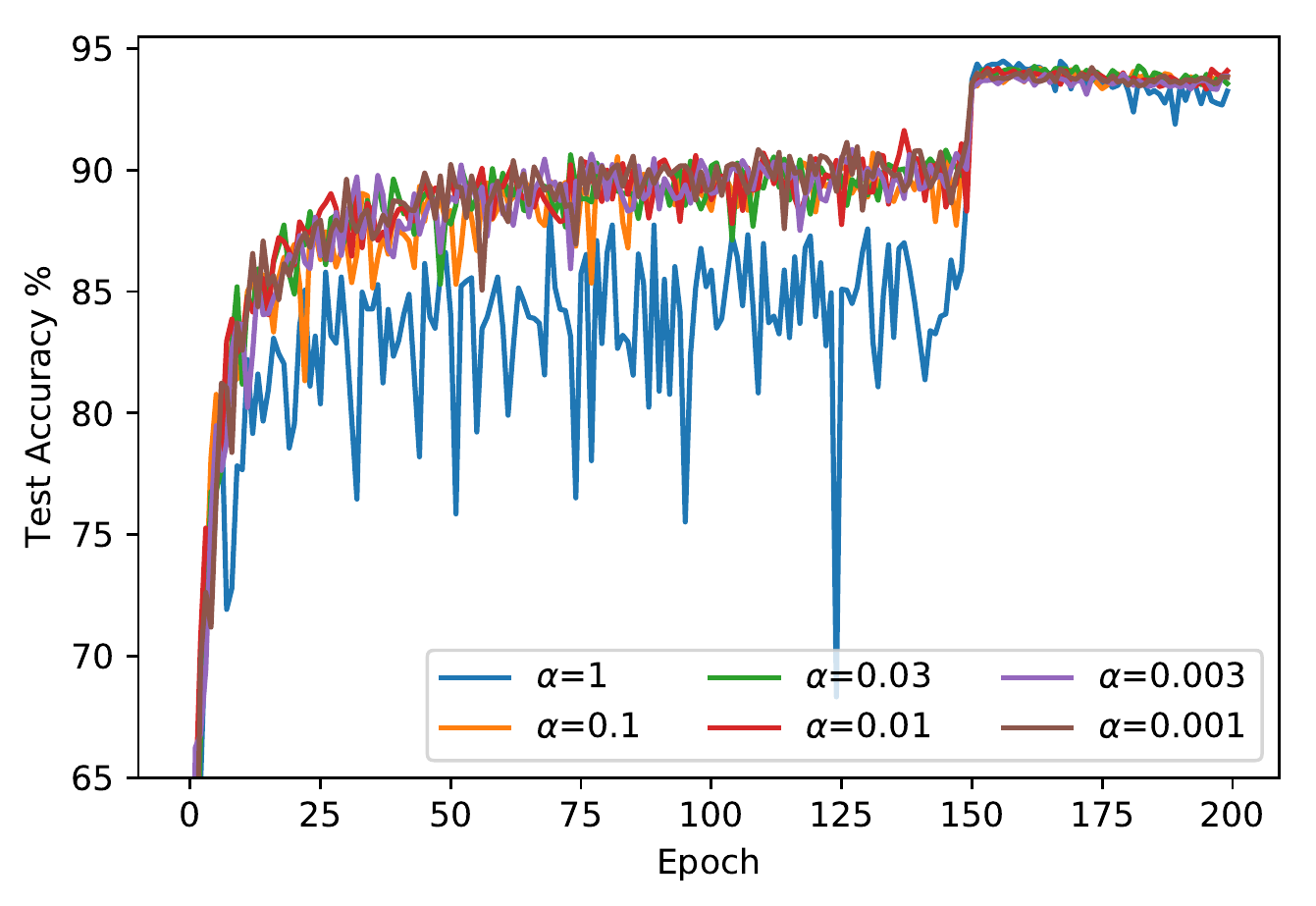}
        \caption{\textsc{AdaBound} with different $\alpha^*$}
    \end{subfigure}
    \caption{
        Test accuracy of \textsc{SgdM}/\textsc{AdaBound} with different $\alpha$/$\alpha^*$ using ResNet-34 on CIFAR-10.
        The result of \textsc{SgdM} with $\alpha = 1$ is not shown above as its performance is too poor (lower than $70\%$) to be plotted together with other results in a single figure.
    }
    \label{fig:alpha}
\end{figure}

\begin{figure}[htb]
\begin{center}
\includegraphics[width=0.75\textwidth]{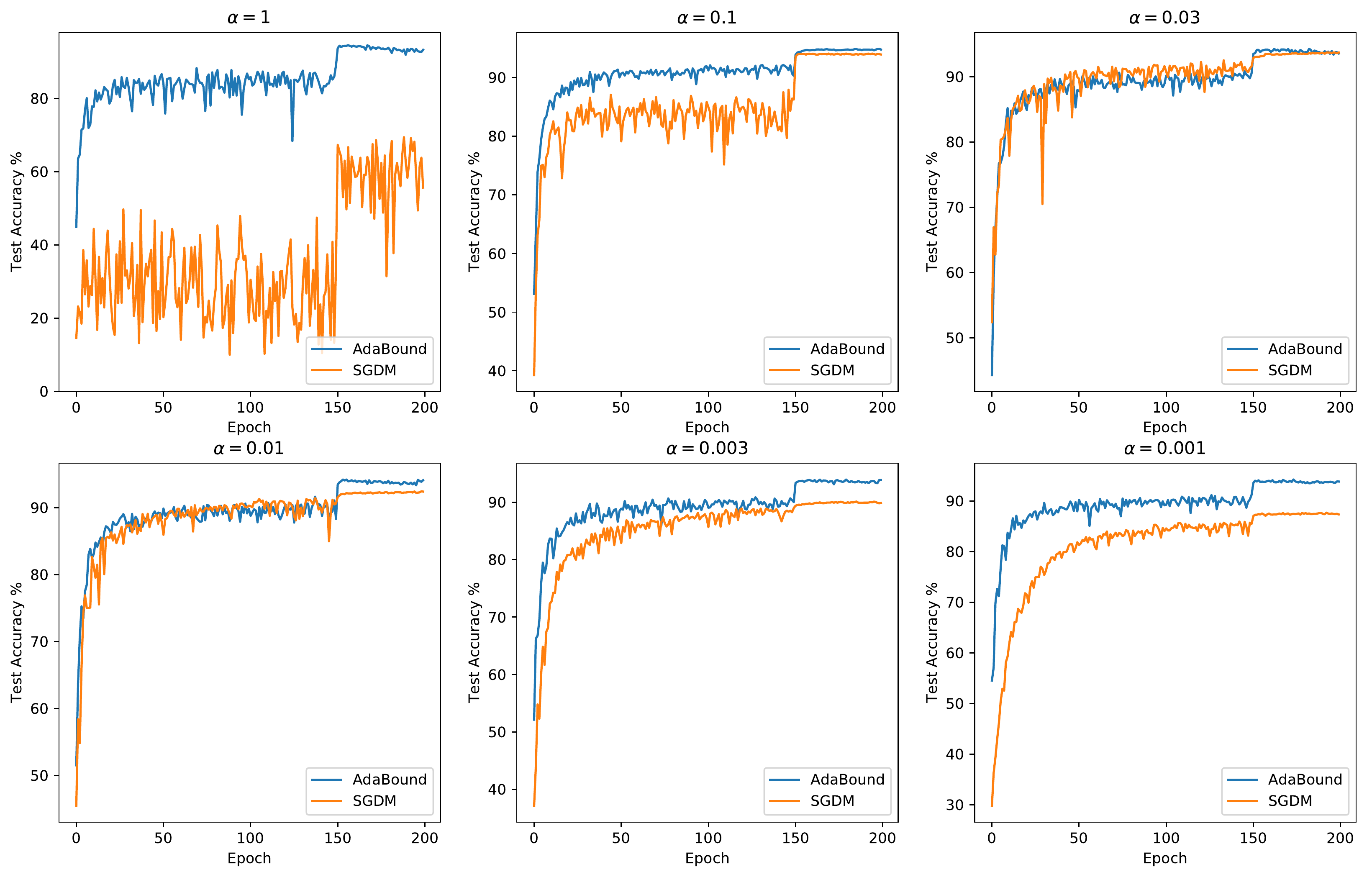}
\end{center}
\caption{Comparison of test accuracy between \textsc{SgdM} and \textsc{AdaBound} with different $\alpha$/$\alpha^*$.}
\label{fig:sgdm&adabound}
\end{figure}

Next, we investigate the impact of convergence target and the results are displayed in Figure~\ref{fig:alpha}.
We test \textsc{SgdM} and \textsc{AdaBound} with different $\alpha$ (or $\alpha^*$) with the ResNet-34 model, where $\alpha$ (or $\alpha^*$) is chosen in $\{ 1, 0.1, 0.03, 0.01, 0.003, 0.001 \}$ and $\beta = 0.99$.
The results show that \textsc{SgdM} is very sensitive to the hyperparameter.
The best value of the step size for \textsc{SgdM} is $0.1$ and it has large performance gaps compared with other settings.
In contrast, \textsc{AdaBound} has stable performance in different final step sizes, which illustrates that it is not sensitive to the convergence target.

We further directly compare the performance between \textsc{SgdM} and \textsc{AdaBound} with each $\alpha$ (or $\alpha^*$).
The results are shown in Figure~\ref{fig:sgdm&adabound}.
We can see that \textsc{AdaBound} outperforms \textsc{SgdM} for all the step sizes.
Since the form of bound functions has minor impact on the performance of \textsc{AdaBound}, it is likely to beat \textsc{SgdM} even without carefully tuning the hyperparameters.

To summarize, the form of bound functions does not much influence the final performance of the methods.
In other words, \textsc{AdaBound} is not sensitive to its hyperparameters.
Moreover, it can achieve a higher or similar performance to \textsc{SgdM} even if it is not carefully fine-tuned.
Therefore, we can expect a better performance by using \textsc{AdaBound} regardless of the choice of bound functions. 

\section{Empirical Study on The Evolution of Learning Rates over Time}

Here we provide an empirical study on the evolution of learning rates of \textsc{AdaBound} over time.
We conduct an experiment using ResNet-34 model on CIFAR-10 dataset with the same settings in Section~\ref{sec:experiments}.
We randomly choose two layers in the network.
For each layer, the learning rates of its parameters are recorded at each time step.
We pick the min/median/max values of the learning rates in each layer and plot them against epochs in Figure~\ref{fig:evolution_lr}.

\begin{figure}[hbt]
    \centering
    \begin{subfigure}[b]{0.48\linewidth}
        \centering
        \includegraphics[width=0.75\linewidth]{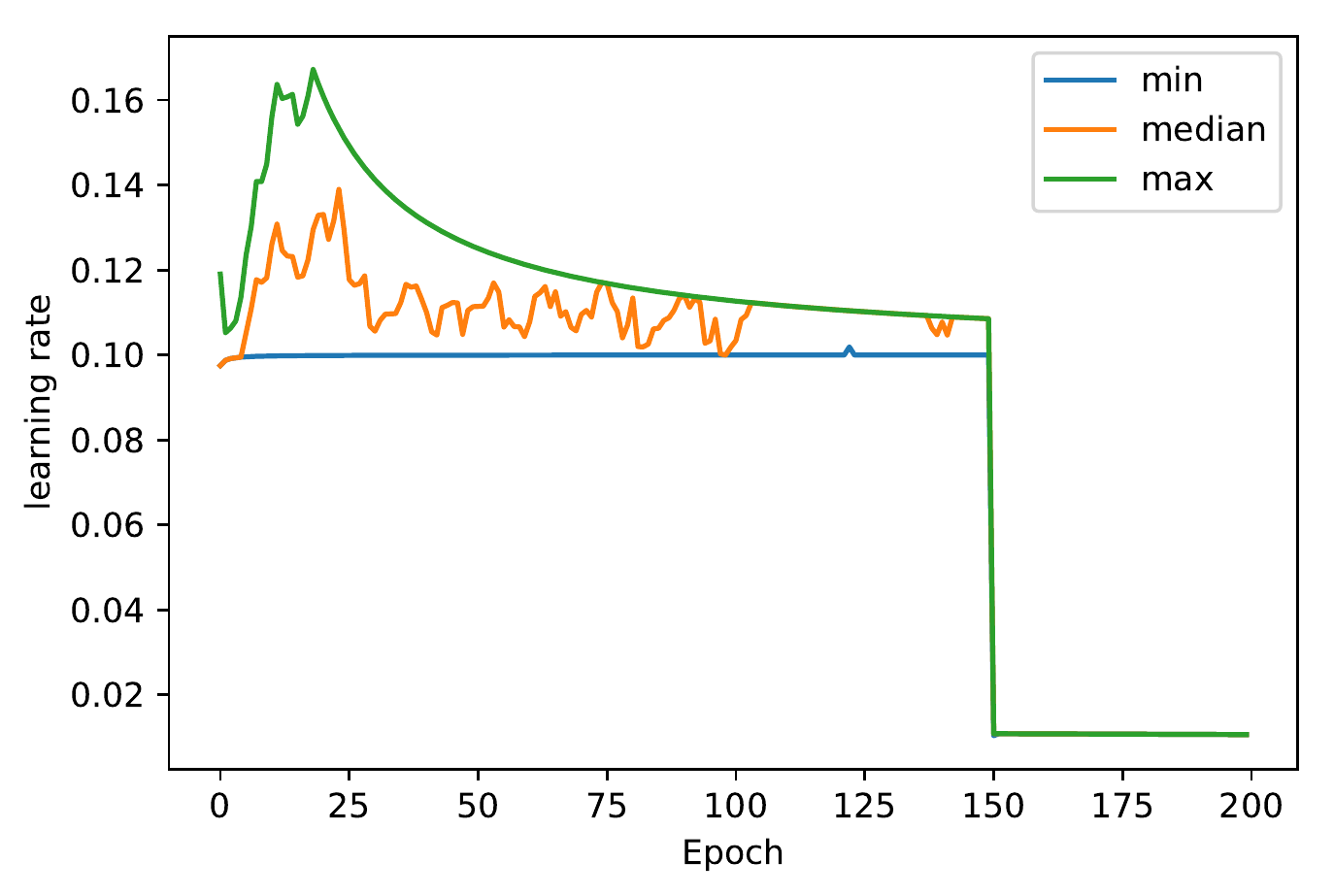}
        % \caption{\textsc{SgdM} with different $\alpha$}
    \end{subfigure}
    \begin{subfigure}[b]{0.48\linewidth}
        \centering
        \includegraphics[width=0.75\linewidth]{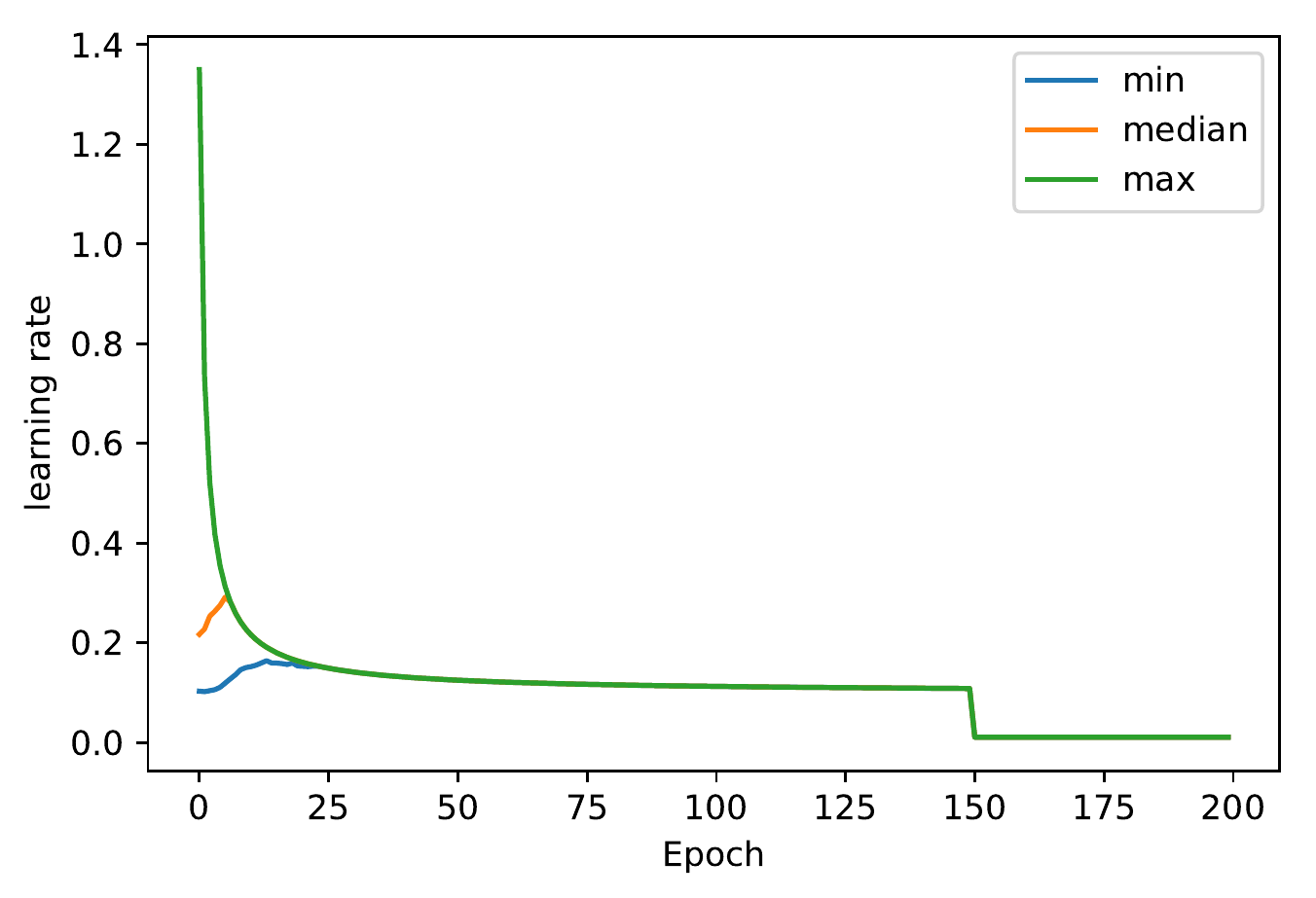}
        % \caption{\textsc{AdaBound} with different $\alpha^*$}
    \end{subfigure}
    \caption{
        The evolution of learning rates over time in two randomly chosen layers.
    }
    \label{fig:evolution_lr}
\end{figure}

We can see that the learning rates increase rapidly in the early stage of training, then after a few epochs its max/median values gradually decrease over time, and finally converge to the final step size.
The increasing at the beginning is due to the property of the exponential moving average of $\phi_t$ of \textsc{Adam}, while the gradually decreasing indicates the transition from \textsc{Adam} to \textsc{Sgd}.

\end{document}